\def\eqref#1{equation~\ref{#1}}
\def\1{\bm{1}}
\def\va{{\bm{a}}}
\def\vb{{\bm{b}}}
\def\vh{{\bm{h}}}
\def\vk{{\bm{k}}}
\def\vq{{\bm{q}}}
\def\vx{{\bm{x}}}
\def\vy{{\bm{y}}}
\def\mK{{\bm{K}}}
\def\mM{{\bm{M}}}
\def\mP{{\bm{P}}}
\def\mQ{{\bm{Q}}}
\def\mU{{\bm{U}}}
\def\mV{{\bm{V}}}
\def\mW{{\bm{W}}}
\def\mX{{\bm{X}}}
\DeclareMathAlphabet{\mathsfit}{\encodingdefault}{\sfdefault}{m}{sl}
\SetMathAlphabet{\mathsfit}{bold}{\encodingdefault}{\sfdefault}{bx}{n}
\def\sK{{\mathbb{K}}}
\def\sQ{{\mathbb{Q}}}
\newcommand{\softmax}{\mathrm{softmax}}
\newcommand{\kchoose}{{\scshape DictionaryLookup}}
\newcommand{\newgat}{GATv2}
\newcommand{\dpgat}{DPGAT}
\newcommand{\intn}{\left[n\right]}
\newcommand{\intm}{\left[m\right]}
\renewcommand\cite{\citep}
\newcommand\para[1]{\vspace{3pt}\noindent \textbf{#1}}
\newcommand{\err}[1]{\scriptsize{$\scriptscriptstyle{\pm}$#1}}
\newcommand{\PreserveBackslash}[1]{\let\temp=\\#1\let\\=\temp}
\newcolumntype{C}[1]{>{\PreserveBackslash\centering}p{#1}}
\newcolumntype{R}[1]{>{\PreserveBackslash\raggedleft}p{#1}}
\newcolumntype{L}[1]{>{\PreserveBackslash\raggedright}p{#1}}
\newtheorem{theorem}{Theorem}
\theoremstyle{definition}
\newtheorem{definition}{Definition}[section]
\newtheoremstyle{case}{}{}{}{}{}{:}{ }{}
\theoremstyle{case}
\DeclarePairedDelimiter\set\{\}
\DeclarePairedDelimiter\abs{\lvert}{\rvert}
\title{How Attentive are Graph Attention \\ Networks? 
}
\author{
\!Shaked Brody \\
	Technion \\
	\texttt{shakedbr@cs.technion.ac.il}
	\And
	Uri Alon \\
	Language Technologies Institute \\ 
	Carnegie Mellon University \\
	\texttt{ualon@cs.cmu.edu}
	\And
	Eran Yahav \\
	Technion \\
	\texttt{yahave@cs.technion.ac.il}
}
\begin{document}
\maketitle

\begin{abstract}
Graph Attention Networks (GATs) are one of the most popular GNN architectures and are considered as the state-of-the-art architecture for representation learning with graphs. In GAT, every node attends to its neighbors given its own representation as the query.
However, in this paper we show that GAT computes a very limited kind of attention: the ranking of the attention scores is \emph{unconditioned on the query node}. We formally define this restricted kind of attention as \emph{static} attention and distinguish it from a strictly more expressive \emph{dynamic} attention.
Because GATs use a \emph{static} attention mechanism, there are simple graph problems that GAT cannot express: in a controlled problem, we show that static attention hinders GAT from even fitting the training data. 
To remove this limitation, we introduce a simple fix by modifying the order of operations
and propose \newgat{}: a \emph{dynamic} graph attention variant that is strictly more expressive than GAT. We perform an extensive evaluation and show that \newgat{} 
outperforms GAT across 12 OGB and other benchmarks while we match their parametric costs. 
Our code is available at \url{https://github.com/tech-srl/how_attentive_are_gats}.\footnote{An annotated implementation of \newgat{} is available at \url{https://nn.labml.ai/graphs/gatv2/}}
\newgat{} is available as part of the PyTorch Geometric library,\footnote{\texttt{from torch\_geometric.nn.conv.gatv2\_conv import GATv2Conv}} 
the Deep Graph Library,\footnote{\texttt{from dgl.nn.pytorch import GATv2Conv}}
and the TensorFlow GNN library.\footnote{\texttt{from tensorflow\_gnn.graph.keras.layers.gat\_v2 import GATv2Convolution}}
\end{abstract}

\section{Introduction}\label{sec:intro}

Graph neural networks \cite[GNNs;][]{gori2005new,scarselli2008graph} have seen increasing popularity over the past few years \cite{duvenaud2015convolutional,atwood2016diffusion,bronstein2017geometric,monti2017geometric}. 
GNNs provide a general and efficient framework to learn from graph-structured data. 
Thus, GNNs are easily applicable in domains where the data can be represented as a set of nodes and the prediction depends on the relationships (edges) between the nodes. Such domains include molecules, social networks, product recommendation, computer programs and more.

In a GNN,
each node iteratively updates its state by interacting with its neighbors. GNN variants \cite{wu2019simplifying,xu2018powerful,li2015gated} mostly differ in how each node aggregates and combines the representations of its neighbors with its own. 
\citet{velic2018graph} pioneered the use of attention-based neighborhood aggregation, in one of the most common GNN variants --  Graph Attention Network (GAT).
In GAT, every node updates its representation by attending to its neighbors using its own representation as the query.
This generalizes the standard averaging or max-pooling of neighbors \cite{kipf2016semi,hamilton2017inductive}, by allowing every node 
to compute a \emph{weighted} average of its neighbors, 
and (softly) select its most relevant neighbors. 
The work of \citeauthor{velic2018graph} also generalizes the Transformer's \cite{vaswani2017attention} self-attention mechanism, from sequences to graphs \cite{joshi2020transformers}. 

Nowadays, GAT is one of the most popular GNN architectures~\cite{bronstein2021geometric}
and is considered as the state-of-the-art neural architecture for learning with graphs \cite{wang2019improving}.
Nevertheless, in this paper we show that \emph{GAT does not actually compute the expressive, well known, type of attention} \cite{bahdanau14}, which we call \emph{dynamic} attention. 
Instead, we show that GAT computes only a restricted ``static'' form of attention: for any query node, the attention function is \emph{monotonic} with respect to the neighbor (key) scores. That is,
the ranking (the $argsort$) of attention coefficients is shared across all nodes in the graph, and is \emph{unconditioned} on the query node. 
This fact severely hurts the expressiveness of GAT, and 
is demonstrated in \Cref{fig:gat-heatmap}.

\newcommand{\introfigheight}{11cm} 

\begin{figure*}[t]
        \centering
        \begin{subfigure}{.50\linewidth}
            \centering
            \includegraphics[height=\introfigheight]{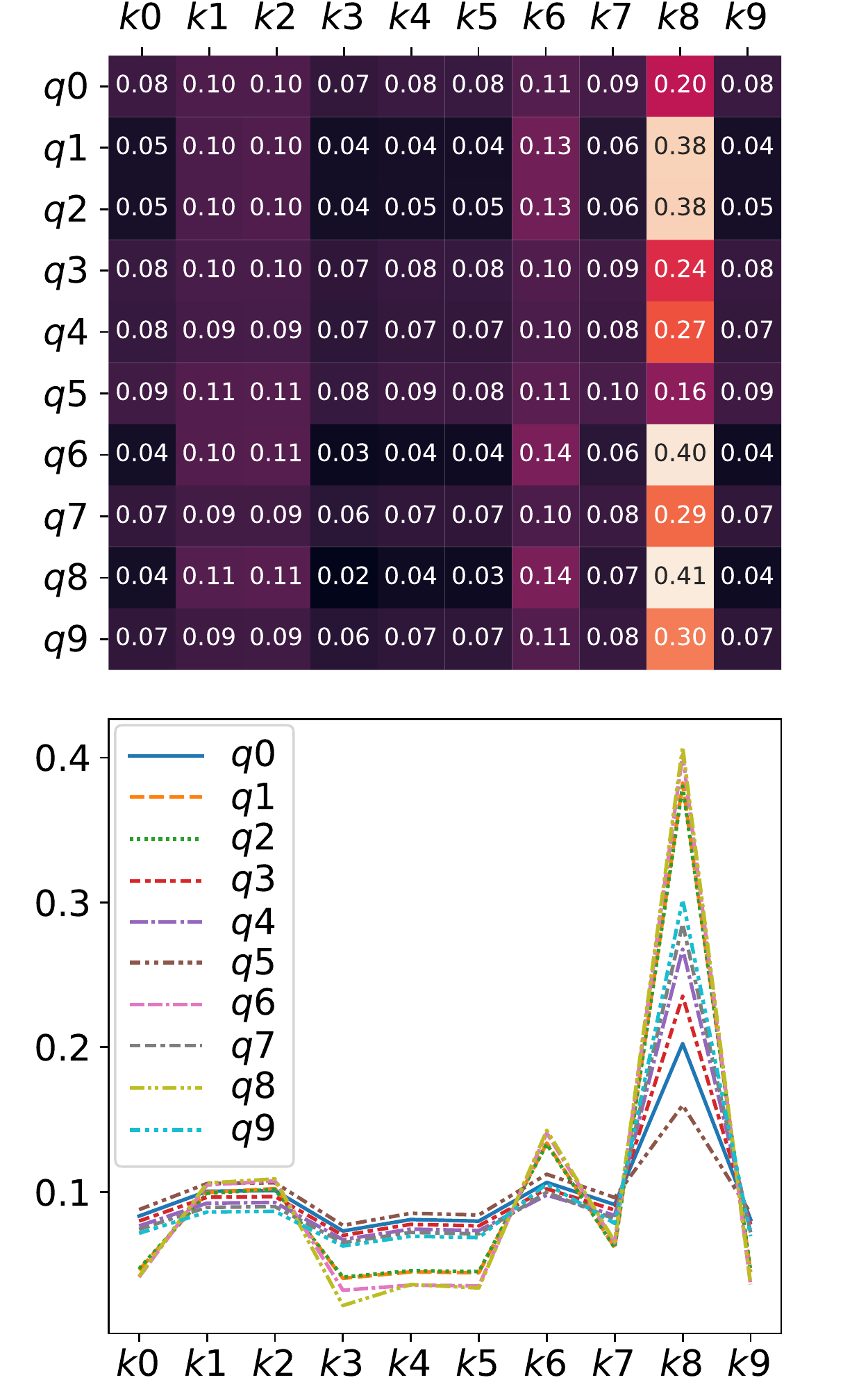}
			\caption{Attention in standard GAT (\citet{velic2018graph})}
            \label{fig:gat-heatmap}
        \end{subfigure}
        \hfill
        \begin{subfigure}{.48\linewidth}
            \centering
            \includegraphics[height=\introfigheight]{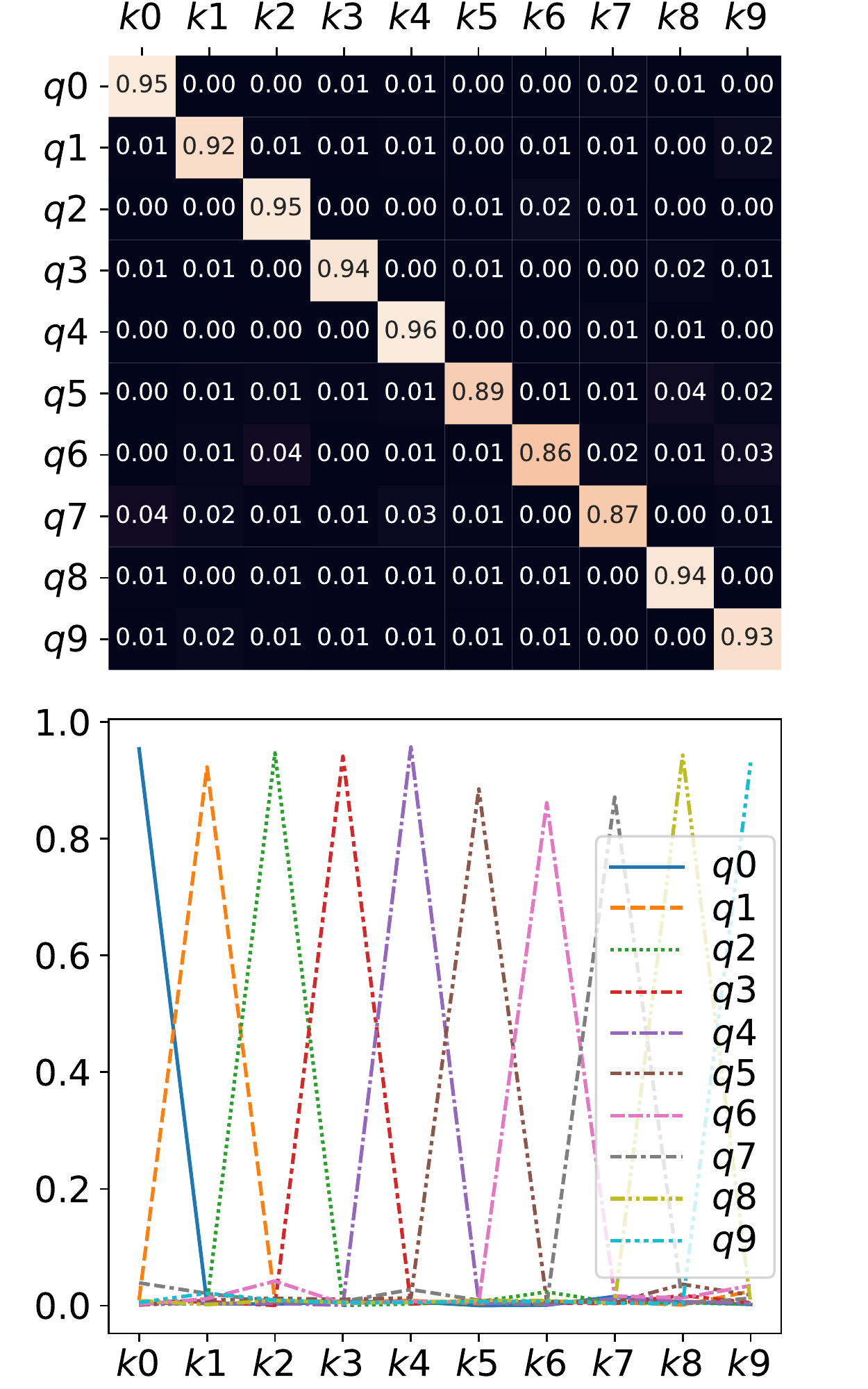}
			\caption{Attention in \newgat{}, our fixed version of GAT}
            \label{fig:newgat-heatmap}
        \end{subfigure}
        \caption{In a complete bipartite graph 
        of ``query nodes'' $\set{q0,...,q9}$ and ``key nodes'' $\set{k0,...,k9}$: standard GAT (\Cref{fig:gat-heatmap}) computes \emph{static} attention -- the ranking of attention coefficients is global for all nodes in the graph, and is unconditioned on the query node. For example, all queries ($q0$ to $q9$) attend mostly to the 8th key ($k8$). In contrast, \newgat{} (\Cref{fig:newgat-heatmap}) can actually compute \emph{dynamic} attention, where every query has a different ranking of attention coefficients of the keys.}
        \label{fig:heatmaps-dpgat-vs-gat}
\end{figure*}

Supposedly, the conceptual idea of attention as the form of interaction 
between GNN nodes is orthogonal to the specific choice of attention function. 
However, \citeauthor{velic2018graph}'s original design of GAT has spread to a variety of domains 
\cite{wang2019improving,Yang_2020_CVPR,wang2019heterogeneous,huang2019syntax,ma2020entity,kosaraju2019social,nathani2019learning,wu2020comprehensive,Zhang2020Adaptive} and has become the default implementation of ``graph attention network'' in all popular GNN libraries such as PyTorch Geometric \cite{fey2019pytorchgeometric}, DGL \cite{wang2019dgl}, %
and others \cite{dwivedi2020benchmarkgnns,pytorchgat,brockschmidt2019graph}.

To overcome the limitation we identified in GAT, we introduce a simple fix to its attention function by only modifying the order of internal operations. The result is \newgat{} -- a graph attention variant that has a universal approximator attention function, %
and is thus \emph{strictly more expressive than GAT}. The effect of fixing the attention function in \newgat{} is demonstrated in \Cref{fig:newgat-heatmap}.

In summary, our main contribution is identifying that one of the most popular GNN types, the graph attention network, does not compute dynamic attention, the kind of attention that it seems to compute.
We introduce formal definitions for analyzing the expressive power of graph attention mechanisms (\Cref{def:static,def:dynamic}),
and derive our claims theoretically (\Cref{theorem:monotonic}) from the equations of \citet{velic2018graph}. 
Empirically, we use a synthetic problem to show that standard GAT \emph{cannot express} problems that require \emph{dynamic} attention (\Cref{subsec:kchoose}).
We introduce a simple fix 
by switching the order of internal operations in GAT, and propose \newgat{}, which \emph{does} compute dynamic attention (\Cref{theorem:dynamic}).
We further conduct a thorough empirical comparison of GAT and \newgat{} 
and find that \newgat{} outperforms GAT across 12 benchmarks of node-, link-, and graph-prediction.
For example, \newgat{} outperforms extensively tuned GNNs by over 1.4\% in the difficult ``UnseenProj Test'' set of the VarMisuse task \cite{allamanis2018learning}, without any hyperparameter tuning; 
and \newgat{} improves over an extensively-tuned GAT by 11.5\% in 13 prediction objectives in QM9.
In node-prediction benchmarks from OGB \cite{hu2020open}, not only that \newgat{} outperforms GAT with respect to accuracy -- we find that dynamic attention provided a much better robustness to noise. %

\section{Preliminaries}
\label{sec:background}
A directed graph $\mathcal{G}=\left(\mathcal{V},\mathcal{E}\right)$ contains nodes $\mathcal{V}=\set{1,...,n}$ and edges $\mathcal{E}\subseteq \mathcal{V}\times \mathcal{V}$, where $\left(j,i\right)\in\mathcal{E}$ denotes an edge from a node $j$ to a node $i$. We assume that every node $i\in\mathcal{V}$ has an initial representation $\vh_i^{\left(0\right)} \in \mathbb{R}^{d_0}$.
An undirected graph can be represented  with bidirectional edges.

\subsection{Graph Neural Networks}
A graph neural network (GNN) layer updates every node representation by aggregating its neighbors' representations. 
A layer's input is a set of node representations $\{\vh_i\in \mathbb{R}^{d} \mid i\in \mathcal{V}\}$ and the set of edges $\mathcal{E}$. A layer outputs a new set of node representations $\{\vh'_i\in \mathbb{R}^{d'} \mid i\in \mathcal{V}\}$, where the same parametric function is applied to every node given its neighbors $\mathcal{N}_i=\{j \in \mathcal{V} \mid \left( j,i \right) \in \mathcal{E}\}$:
\begin{equation}
	\vh'_i=f_{\theta}\left(
	\vh_i, 
	\mathrm{\scriptstyle{AGGREGATE}}\left(\set{\vh_j \mid j\in\mathcal{N}_i}\right)
	 \right)
	\label{eq:layer}
\end{equation}
The design of $f$ and $\mathrm{\scriptstyle{AGGREGATE}}$ is what mostly distinguishes one type of GNN from the other. 
For example, %
a common variant of GraphSAGE \cite{hamilton2017inductive} performs an element-wise mean as $\mathrm{\scriptstyle{AGGREGATE}}$, followed by concatenation with $\vh_i$,   a linear layer and a ReLU as $f$.

\subsection{Graph Attention Networks}
\label{subsec:gats}
GraphSAGE and many other popular GNN architectures \cite{xu2018powerful,duvenaud2015convolutional} weigh all neighbors $j\in\mathcal{N}_i$ with \emph{equal importance} (e.g., mean or max-pooling as $\mathrm{\scriptstyle{AGGREGATE}}$).
To address this limitation, GAT \cite{velic2018graph} instantiates \Cref{eq:layer} by computing a learned weighted average of the representations of $\mathcal{N}_i$. 
A scoring function $e: \mathbb{R}^{d}$$\,\times\,$$\mathbb{R}^{d}$$\,\rightarrow\,$$ \mathbb{R}$ computes a score %
for every edge $\left(j,i\right)$,
which indicates the importance of the features of the neighbor $j$ to the node $i$:
\begin{equation}
	e\left(\vh_i, \vh_j\right)=
	\mathrm{LeakyReLU}
	\left(
		\va^{\top}\cdot\left[\mW\vh_i \| \mW\vh_j\right]
	\right)
	\label{eq:gat}
\end{equation}
where $\va\in \mathbb{R}^{2d'}$, $\mW\in \mathbb{R}^{d'\times d}$ are learned, %
and $\|$ denotes vector concatenation.
These attention scores are normalized across all neighbors $j\in \mathcal{N}_i$ using softmax, and the attention function is defined as:
\begin{equation}
	\alpha_{ij} =
	\softmax_j\left(
	e\left(\vh_i, \vh_{j}\right)
	\right) =
	\frac{\mathrm{exp}\left(e\left(\vh_i, \vh_j\right)\right)}{\sum\nolimits_{j'\in\mathcal{N}_i} \mathrm{exp}\left(e\left(\vh_i, \vh_{j'}\right)\right)}
	\label{eq:softmax}
\end{equation}
Then, GAT computes a weighted average of the transformed features of the neighbor nodes (followed by a nonlinearity $\sigma$) as the new representation of $i$, using the normalized attention coefficients:
\begin{equation}
	\vh'_i=\sigma
	\left(
		\sum\nolimits_{j\in\mathcal{N}_{i}}
		\alpha_{ij}
		\cdot\mW\vh_j
	\right)
	\label{eq:weighted_avg}
\end{equation}
 From now on, we will refer to \Cref{eq:softmax,eq:weighted_avg,eq:gat} as the definition of GAT. 
\section{The Expressive Power of Graph Attention Mechanisms}
\label{sec:expressive}
In this section, we explain why attention is limited when it is not \emph{dynamic} (\Cref{subsec:dynamic}). We then show that GAT is severely constrained, 
because it can only compute \emph{static} attention (\Cref{subsec:limitation}). Next, we show how GAT can be fixed (\Cref{subsec:building}), by simply modifying the order of operations. 

We refer to a neural architecture (e.g., the scoring or the attention function of GAT) as a \emph{family of functions}, parameterized by the learned parameters. An element in the family is a concrete function with specific trained weights. %
In the following, we use $\intn$ to denote the set $\intn = \set{1, 2, ..., n} \subset \mathbb{N}$.

\subsection{The Importance of Dynamic Weighting}
\label{subsec:dynamic}
Attention is a mechanism for computing a distribution over a set of input \emph{key} vectors, given an additional \emph{query} vector. 
If the attention function always weighs one key at least as much as any other key, 
\emph{unconditioned on the query}, we say that this attention function is \emph{static}:
\begin{definition}[Static attention]
A (possibly infinite) family of scoring functions $\mathcal{F}$$\,\subseteq\,$$\left(\mathbb{R}^{d}\times \mathbb{R}^{d}\rightarrow \mathbb{R}\right)$ computes \emph{static scoring} for a given set of key vectors $\sK$$\,=\,$$\set{\vk_1, ..., \vk_{n}}$$\,\subset\,$$ \mathbb{R}^{d}$ and query vectors $\sQ$$\,=\,$$\set{\vq_1, ..., \vq_{m}}$$\,\subset\,$$ \mathbb{R}^{d}$,
	if for every $f \in \mathcal{F}$
	there exists a ``highest scoring'' key $j_{f} \in \intn$ %
	such that for every query $i \in \intm$ and key $j \in \intn$ it holds that
	$f\left(\vq_i, \vk_{j_{f}}\right) \geq f\left(\vq_i, \vk_{j}\right)$. 
	We say that a family of attention functions computes \emph{static attention}  given $\mathbb{K}$ and $\mathbb{Q}$, 
if its scoring function computes static scoring, possibly followed by monotonic normalization such as softmax.
	\label{def:static}
\end{definition}
Static attention is very limited because every function $f\in\mathcal{F}$ has 
a key that is \emph{always selected}, regardless of the query.
Such functions  cannot model situations where different keys have different relevance to different queries.  Static attention is demonstrated in \Cref{fig:gat-heatmap}.

The general and powerful form of attention is \emph{dynamic attention}:
\begin{definition}[Dynamic attention]
		A (possibly infinite) family of scoring functions $\mathcal{F}$$\,\subseteq\,$$\left(\mathbb{R}^{d}\times \mathbb{R}^{d}\rightarrow\mathbb{R}\right)$ computes \emph{dynamic scoring} for a given set of key vectors $\sK$$\,=\,$$\set{\vk_1, ..., \vk_{n}}$$\,\subset\,$$ \mathbb{R}^{d}$ and query vectors $\sQ$$\,=\,$$\set{\vq_1, ..., \vq_{m}}$$\,\subset\,$$ \mathbb{R}^{d}$,
	if for \emph{any} mapping $\varphi$$: \intm \rightarrow \intn$ there exists $f \in \mathcal{F}$ such that 
	for any query  $i \in \intm$ 
	and any key $j_{\neq \varphi\left(i\right)}  \in \intn$:
	$f\left(\vq_{i}, \vk_{\varphi\left(i\right)}\right) > f\left(\vq_{i}, \vk_{j}\right)$. 
We say that a family of attention functions computes \emph{dynamic attention} for $\mathbb{K}$ and $\mathbb{Q}$, 
if its scoring function computes dynamic scoring, possibly followed by monotonic normalization such as softmax.

	\label{def:dynamic}
\end{definition}
That is, dynamic attention %
can \emph{select} every key $\varphi\left(i\right)$ using the  query $i$, by making $f\left(\vq_{i}, \vk_{\varphi\left(i\right)}\right)$ the maximal in $\set{f\left(\vq_{i}, \vk_{j}\right) \mid j\in \intn}$. 
Note that \emph{dynamic} and \emph{static} attention are exclusive properties, but they are not complementary. Further, every \emph{dynamic} attention family has strict subsets of \emph{static} attention families with respect to the same $\mathbb{K}$ and $\mathbb{Q}$.
Dynamic attention is demonstrated in \Cref{fig:newgat-heatmap}.

\para{Attending by decaying}
Another way to think about attention is the ability to ``focus'' on the most relevant inputs, given a query. Focusing is only possible by \emph{decaying} other inputs, i.e., giving these decayed inputs lower scores than others. %
If one key is always given an equal or greater attention score than other keys (as in static attention), no query can  ignore this key or decay this key's score.

\subsection{The Limited Expressivity of GAT}
\label{subsec:limitation}
Although the scoring function $e$  can be defined in various ways,  
the original definition of
\citet{velic2018graph} (\Cref{eq:gat}) has become the \emph{de facto} practice: it has spread to a variety of domains 
and is now the standard implementation of ``graph attention network'' in all popular GNN libraries \cite{fey2019pytorchgeometric, wang2019dgl, dwivedi2020benchmarkgnns, pytorchgat, brockschmidt2019graph}.

The motivation of GAT is to compute a representation for every node as a weighted average of its neighbors. 
Statedly, GAT is inspired by the attention mechanism of \citet{bahdanau14} and the self-attention mechanism of the Transformer \cite{vaswani2017attention}. 
Nonetheless:
\begin{theorem}
A GAT layer computes only static attention, for any set of node representations $\mathbb{K}=\mathbb{Q}=\set{\vh_1,...,\vh_n}$. 
	In particular, for $n$$\,>\,$$1$,
a GAT layer does not compute dynamic attention.%
\label{theorem:monotonic}	
\end{theorem}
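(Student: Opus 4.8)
The plan is to exploit the additive structure that the concatenation inside \Cref{eq:gat} forces on the scoring function. First I would split the learned vector $\va \in \mathbb{R}^{2d'}$ into its two halves $\va = \left[\va_1 \,\|\, \va_2\right]$ with $\va_1, \va_2 \in \mathbb{R}^{d'}$, so that the pre-activation of the $\mathrm{LeakyReLU}$ becomes a sum of a query-only term and a key-only term:
\begin{equation}
e\left(\vh_i, \vh_j\right) = \mathrm{LeakyReLU}\left(\va_1^\top \mW \vh_i + \va_2^\top \mW \vh_j\right).
\end{equation}
The crucial observation is that for a fixed choice of weights $\va, \mW$ (i.e., a fixed $f$ in the GAT family) the summand $\va_2^\top \mW \vh_j$ depends on the key index $j$ alone and not on the query index $i$.

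Next I would fix such an $f$ and let $j_f \in \intn$ be an index maximizing $s_j := \va_2^\top \mW \vh_j$ over $j \in \intn$ (ties broken arbitrarily). Then for every query $i$ and every key $j$ we have $\va_1^\top \mW \vh_i + s_{j_f} \geq \va_1^\top \mW \vh_i + s_j$, and because $\mathrm{LeakyReLU}$ is monotonically non-decreasing this inequality survives the activation, giving $e\left(\vh_i, \vh_{j_f}\right) \geq e\left(\vh_i, \vh_j\right)$. This is exactly the condition of \Cref{def:static} for static scoring with $\mathbb{K} = \mathbb{Q} = \set{\vh_1, \dots, \vh_n}$; and since $\softmax$ is strictly monotone and shares the same denominator across a node's neighbors, it preserves this ordering, so the same key $j_f$ also maximizes $\alpha_{ij}$ for every $i$, establishing static attention.

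For the ``in particular'' claim with $n > 1$, I would argue that static and dynamic attention cannot coexist for this $\mathbb{K} = \mathbb{Q}$. Suppose for contradiction that the GAT family also computed dynamic attention here. Applying \Cref{def:dynamic} to the non-constant map $\varphi$ with $\varphi(1) = 1$ and $\varphi(2) = 2$ (well-defined since $n = m > 1$) yields some $f$ in the family with $f(\vh_1, \vh_1) > f(\vh_1, \vh_j)$ for all $j \neq 1$ and $f(\vh_2, \vh_2) > f(\vh_2, \vh_j)$ for all $j \neq 2$. But by the static property just established, this $f$ has a distinguished key $j_f$: if $j_f = 1$ then $f(\vh_2, \vh_1) \geq f(\vh_2, \vh_2)$, contradicting the second family of strict inequalities; if $j_f \neq 1$ then $f(\vh_1, \vh_{j_f}) \geq f(\vh_1, \vh_1)$, contradicting the first. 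Either way we reach a contradiction, so GAT does not compute dynamic attention.

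I do not expect a hard computational step; the only real care needed is in keeping the quantifiers of \Cref{def:static,def:dynamic} straight (one key works for \emph{all} queries, versus \emph{every} key being reachable by \emph{some} choice of weights), and in flagging explicitly that the monotonicity of $\mathrm{LeakyReLU}$ is what makes the argument go through — with a non-monotone activation the query-dependent offset $\va_1^\top \mW \vh_i$ could change which key attains the maximum, so the limitation is a genuine property of this particular scoring function and not of attention mechanisms in general.
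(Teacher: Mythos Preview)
Your proposal is correct and follows essentially the same route as the paper: split $\va=[\va_1\|\va_2]$, observe that the pre-activation decomposes into a query-only plus a key-only scalar, pick the key maximizing the key-only part, and push the inequality through the monotone $\mathrm{LeakyReLU}$ and $\softmax$. Your handling of the ``not dynamic'' clause is slightly more explicit than the paper's --- you exhibit a concrete non-constant $\varphi$ and derive a case-split contradiction against the global maximizer $j_f$, whereas the paper simply notes that only constant mappings $\varphi$ can be realized --- but the underlying idea is identical.
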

\begin{proof} 
Let $\mathcal{G}=\left(\mathcal{V},\mathcal{E}\right)$ be a graph modeled by 
a GAT layer with 
some $\va$ and $\mW$ values (\Cref{eq:softmax,eq:gat}), and having node representations $\set{\vh_1,..., \vh_n}$.
The learned parameter $\va$ can be written as a concatenation $\va=\left[\va_1 \| \va_2\right]\in\mathbb{R}^{2d'}$ such that $ \va_1,\va_2\in\mathbb{R}^{d'}$, and \Cref{eq:gat} can be re-written as:
\begin{equation}
	e\left(\vh_{i}, \vh_{j}\right)=
	\mathrm{LeakyReLU}
	\left(
		\va_1^{\top}\mW\vh_{i} + \va_2^{\top}\mW\vh_{j}
	\right)
	\label{eq:gat-re}
\end{equation}
Since $\mathcal{V}$ is finite, 
there exists a node $j_{max}\in\mathcal{V}$ such that $\va_2^{\top}\mW\vh_{j_{max}}$ is maximal among all nodes $j\in\mathcal{V}$ ($j_{max}$ is the $j_f$ required by \Cref{def:static}).
Due to the monotonicity of $\mathrm{LeakyReLU}$ and $\mathrm{softmax}$, for every query node $i \in\mathcal{V}$, the node $j_{max}$ also leads to the maximal value of its attention distribution
$\set{\alpha_{ij} \mid j \in \mathcal{V}}$. 
Thus, from \Cref{def:static} directly, $\alpha$ computes
 only \emph{static attention}. 
 This also implies that $\alpha$ does not compute dynamic attention, because in GAT, \Cref{def:dynamic} holds only for \emph{constant} mappings $\varphi$ that map all inputs to the same output.
\end{proof}
The consequence of \Cref{theorem:monotonic} is that for any set of nodes $\mathcal{V}$ and a trained GAT layer, 
the attention function $\alpha$
defines a constant ranking ($argsort$) of the nodes, unconditioned on the query nodes $i$. That is, we can denote $s_j=\va_2^{\top}\mW\vh_{j}$ and get that for any choice of $\vh_i$, $\alpha$ is monotonic with respect to the per-node scores $\set{s_j\mid j \in \mathcal{V} }$. 
This global ranking induces the local ranking of every neighborhood $\mathcal{N}_i$.
The only effect of $\vh_i$ is in the ``sharpness'' of the produced attention distribution. This is demonstrated in \Cref{fig:gat-heatmap} (bottom), where different curves denote different queries ($\vh_i$). %

\para{Generalization to multi-head attention} 
\citet{velic2018graph} found it beneficial to employ $H$ separate attention heads and concatenate their outputs, 
similarly to Transformers.
In this case, \Cref{theorem:monotonic} holds for each head separately:  every head $h\in \left[H\right]$ has a (possibly different) node that maximizes 
$\set{s_j^{\left(h\right)}\mid j \in \mathcal{V} }$
, and the output is the concatenation of $H$ static attention heads.

\subsection{Building Dynamic Graph Attention Networks}
\label{subsec:building}
To create a \emph{dynamic} graph attention network, 
we modify the order of internal operations in GAT and introduce \newgat{} -- a simple fix of GAT that 
has a strictly more expressive attention mechanism. %

\para{\newgat} The main problem in the standard GAT scoring function (\Cref{eq:gat}) is that the learned layers $\mW$ and $\va$ are applied consecutively, and thus can be collapsed into a \emph{single} linear layer. %
To fix this limitation, we simply apply the $\va$ layer \emph{after} the nonlinearity ($\mathrm{LeakyReLU}$), and  the $\mW$ layer after the concatenation,
\footnote{We also add a bias vector $\vb$ before applying the nonlinearity, we omit this in \Cref{eq:gat2-vs-gat} for brevity. } 
effectively applying an MLP to compute the score for each query-key pair:
\begin{align}
&			 \mathrm{GAT} \text{ \cite{velic2018graph}:} &
e\left(\vh_i, \vh_j\right)= &
	\mathrm{LeakyReLU}
	\left(
		\va^{\top}\cdot\left[\mW\vh_i \| \mW\vh_j\right]
	\right)
	\label{eq:gat-vs-gat2}  
\\
	&			 \mathrm{\newgat{}} \text{ (our fixed version):} &
 e
 \left(\vh_{i}, \vh_{j}\right)  = &
\va
^{\top}
	\mathrm{LeakyReLU}
	\left(
		\mW \cdot \left[\vh_{i} \| \vh_{j}\right] 
	\right) \label{eq:gat2-vs-gat}
\end{align}
The simple modification %
makes a significant difference in the expressiveness of the attention function:
\begin{theorem}
	A \newgat{} layer computes dynamic attention
for any set of node representations $\mathbb{K}=\mathbb{Q}=\set{\vh_1,...,\vh_n}$
 .
	\label{theorem:dynamic}
\end{theorem}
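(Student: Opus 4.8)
The goal is to show that the \newgat{} scoring function (\Cref{eq:gat2-vs-gat}) realizes dynamic attention in the sense of \Cref{def:dynamic}: for an \emph{arbitrary} mapping $\varphi:\intn\rightarrow\intn$ (here $m=n$ since $\mathbb{K}=\mathbb{Q}$), we must exhibit concrete weights $\mW$ and $\va$ (and the bias $\vb$) so that for every $i$, the score $e\left(\vh_i,\vh_{\varphi\left(i\right)}\right)$ is strictly larger than $e\left(\vh_i,\vh_j\right)$ for all $j\neq\varphi\left(i\right)$. The natural route is a \emph{universal approximation} argument: the map $\left(\vh_i,\vh_j\right)\mapsto\va^{\top}\mathrm{LeakyReLU}\!\left(\mW\cdot[\vh_i\|\vh_j]+\vb\right)$ is exactly a one-hidden-layer MLP with a LeakyReLU nonlinearity acting on the concatenated input in $\mathbb{R}^{2d}$, so by the classical universal approximation theorem it can approximate any continuous target function on a compact set to arbitrary accuracy.

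First I would fix the finite set of input pairs $\set{\left(\vh_i,\vh_j\right)\mid i,j\in\intn}\subset\mathbb{R}^{2d}$, which is compact. I would then \emph{design} a continuous target function $g:\mathbb{R}^{2d}\rightarrow\mathbb{R}$ that, restricted to these pairs, has the desired strict-argmax structure: e.g., for each $i$ pick a small ball around $[\vh_i\|\vh_{\varphi\left(i\right)}]$ on which $g$ equals some large value, and a neighborhood of every other relevant pair on which $g$ is small, then interpolate (such a $g$ exists because the finitely many points $[\vh_i\|\vh_j]$ can be separated — if two coincide, e.g.\ $\vh_i=\vh_{i'}$, then $\varphi$ must agree on them for the statement to even be consistent, and I would need to handle or note this edge case). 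Concretely one can take $g$ to output $1$ if the second block of its argument is (close to) $\vh_{\varphi\left(i\right)}$ given that the first block is $\vh_i$, and $0$ otherwise. Second, I would invoke universal approximation to get $\mW,\vb,\va$ such that $\bigl|e\left(\vh_i,\vh_j\right)-g\left([\vh_i\|\vh_j]\right)\bigr|<\tfrac{1}{3}$ for all $i,j$; the $\tfrac13$-margin then forces $e\left(\vh_i,\vh_{\varphi\left(i\right)}\right)>e\left(\vh_i,\vh_j\right)$ for every $j\neq\varphi\left(i\right)$. Third, I would note that composing with softmax (a monotonic normalization) preserves the strict ordering, so the attention function — not just the scoring function — computes dynamic attention, matching \Cref{def:dynamic}.

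One subtlety I would be careful about is that the standard universal approximation theorems are usually stated for ReLU or sigmoidal activations, whereas \Cref{eq:gat2-vs-gat} uses LeakyReLU; since LeakyReLU is a continuous non-polynomial (piecewise-linear) activation, the Leshno–Pinkus–style version of the theorem still applies, and moreover $\mathrm{ReLU}\left(x\right)$ can be written as a linear combination of $\mathrm{LeakyReLU}\left(x\right)$ and $\mathrm{LeakyReLU}\left(-x\right)$, so no real loss of generality. The main obstacle, such as it is, is not the approximation itself but the bookkeeping of the target function: making sure the constructed $g$ is genuinely continuous on all of $\mathbb{R}^{2d}$ (not just defined pointwise), that the strict inequalities survive the approximation error with a clean margin, and that degenerate cases with repeated node features are either excluded by hypothesis or shown to be vacuous. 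Once the target and the margin are set up cleanly, the rest is a direct application of \Cref{def:dynamic}.
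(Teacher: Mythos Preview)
Your proposal is correct and follows essentially the same route as the paper: define a $\{0,1\}$-valued target on the finitely many concatenated pairs, extend it to a continuous function on $\mathbb{R}^{2d}$, invoke universal approximation (noting that LeakyReLU is a non-polynomial activation, citing Leshno et al.), and use a margin argument (the paper uses $\epsilon<\tfrac12$ where you use $\tfrac13$) followed by the monotonicity of softmax. You are in fact slightly more careful than the paper in flagging the degenerate case of repeated node representations, which the paper's proof silently assumes away.
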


We prove \Cref{theorem:dynamic} in \Cref{sec:proof-dynamic}.
The main idea is that we can define an appropriate function that  \newgat{} will be a universal approximator
\cite{cybenko1989approximation, hornik1991approximation} of. In contrast, GAT (\Cref{eq:gat-vs-gat2}) cannot approximate any such desired function (\Cref{theorem:monotonic}).

\para{Complexity}
\newgat{} has the same time-complexity as GAT's declared complexity:~$\mathcal{O}\left(\abs{\mathcal{V}}dd' + \abs{\mathcal{E}}d'\right)$. However, by merging its linear layers, GAT can %
be computed faster
than stated by \citet{velic2018graph}.
For a detailed time- and parametric-complexity analysis, %
see \Cref{sec:complexity}.

\section{Evaluation}
\label{sec:eval}
First, we demonstrate the weakness of GAT using a simple synthetic problem that GAT cannot even fit (cannot even achieve high \emph{training} accuracy), but is easily solvable by \newgat{} (\Cref{subsec:kchoose}).
Second, we show that \newgat{} is much more \emph{robust to edge noise}, because its dynamic attention mechanisms allow it to decay noisy (false) edges, while GAT's performance severely decreases as noise increases (\Cref{subsec:robustness}).
Finally, we compare GAT and \newgat{}
across 12 benchmarks overall. %
(\Cref{subsec:node,subsec:varmisuse,subsec:link,subsec:graph,subsec:pubmed}). 
We find that GAT is inferior to \newgat{} across all examined benchmarks.

\para{Setup}
When previous results exist, we take hyperparameters that were tuned for GAT and use them in  \newgat{}, without any additional tuning.
Self-supervision \cite{kim2021how,rong2020self}, graph regularization 
\cite{Zhao2020PairNorm,rong2020dropedge}, and other tricks 
\cite{wang2021bag,huang2021combining} are orthogonal to the contribution of the GNN layer itself, and may further improve all GNNs.
In all experiments of \newgat{}, we constrain the learned matrix by setting $\mW=\left[\mW' \| \mW'\right]$, to rule out the increased number of parameters over GAT as the source of empirical difference
 (see \Cref{subsec:parametric-cost}).
Training details, statistics, and code are provided in \Cref{sec:training}. %

Our main goal is to compare dynamic and static graph attention mechanisms.
However, for reference, we also include non-attentive baselines such as GCN \cite{kipf2016semi}, GIN \cite{xu2018powerful} and GraphSAGE \cite{hamilton2017inductive}. These non-attentive GNNs can be thought of as a special case of attention, where every node gives all its neighbors the same attention score.
Additional comparison to a Transformer-style scaled dot-product attention (``\dpgat{}''), which is \emph{strictly weaker} than our proposed \newgat{} (see a proof in \Cref{sec:dpgat-proof}),  is shown in \Cref{sec:dpgat}.

\begin{figure*}[t]
\begin{minipage}[t][][b]{0.35\textwidth}
\input{}	
\end{minipage}
\hfill
\begin{minipage}[t][][b]{0.62\textwidth}
\definecolor{ao}{rgb}{0.0, 0.5, 0.0}
\definecolor{mypurple}{HTML}{AB30C4}

	\begin{tikzpicture}[scale=1]
	    \definecolor{color0}{rgb}{0.917647058823529,0.917647058823529,0.949019607843137}
    \definecolor{color1}{rgb}{0.282352941176471,0.470588235294118,0.815686274509804}
    \definecolor{color3}{rgb}{0.933333333333333,0.52156862745098,0.290196078431373}
    \definecolor{color2}{rgb}{0.415686274509804,0.8,0.392156862745098}
	\begin{axis}[
		xlabel={$k$ (number of different keys in each graph)},
		ylabel={\footnotesize{Accuracy}},
		ylabel near ticks,
        legend style={at={(1,1)},anchor=north west,mark size=2pt,
        	font=\footnotesize, %
        	inner xsep=0pt, inner sep=1pt},
        legend cell align={left},
        xmin=3.5, xmax=20.2,
        ymin=-0.0, ymax=107,
        xtick={4,...,20},
        ytick={0,10,20,...,100},
        label style={font=\footnotesize},
        ylabel style={font=\footnotesize},
        ylabel shift={-5pt},        
        tick label style={font=\scriptsize} ,
        grid = major,
        major grid style={dotted,gray},
        width = 0.78\linewidth, height = 52.55mm
    ]

\addplot[color=color2, solid, mark options={solid, fill=color2, draw=black}, mark=triangle*, line width=0.5pt, mark size=3pt, visualization depends on=\thisrow{alignment} \as \alignment, nodes near coords, point meta=explicit symbolic,
    every node near coord/.style={anchor=\alignment, font=\scriptsize}] 
table [meta index=2]  {
x   y       label   alignment
4	100		{}		-129
5	100		{}		-90
6	100		{}		-90
7	100		{}		-60
8	100		{}		180
9	100		{}		180
10	100	{}	-90
11	100		{}		-30
12	100		{}		-30
13	100		{}		-30
14	100		{}		-30
15	100		{}		-30
16	100		{}		-30
17	100		{}		-30
18	100		{}		-30
19	100		{}		-30
20	100		{}		-30
	};
    \addlegendentry{\newgat{} test}

\addplot[color=color3, mark options={solid, fill=color3, draw=black}, mark=square*, line width=0.5pt, mark size=2pt, visualization depends on=\thisrow{alignment} \as \alignment, nodes near coords, point meta=explicit symbolic,
    every node near coord/.style={anchor=\alignment, font=\scriptsize}] 
table [meta index=2]  {
x   y       label   alignment
4	29		{}		-129
5	30		{}		-90
6	48		{}		-90
7	75		{}		-60
8	85		{}		180
9	67		{}		180
10	87	{}	-90
11	68		{}		-30
12	53		{}		-30
13	65		{}		-30
14	50		{}		-30
15	59		{}		-30
16	53		{}		-30
17	66		{}		-30
18	42		{}		-30
19	52		{}		-30
20	43		{}		-30
	};
    \addlegendentry{GAT$_{8h}$ test}      

\addplot[color=color1,densely dashed, mark options={solid, fill=color1, draw=black}, mark=*, line width=0.5pt, mark size=2pt, visualization depends on=\thisrow{alignment} \as \alignment, nodes near coords, point meta=explicit symbolic,
    every node near coord/.style={anchor=\alignment, font=\scriptsize}] 
table [meta index=2]  {
x   y       label   alignment
4	85		{}		-129
5	83		{}		-90
6	67		{}		-90
7	63		{}		-60
8	34		{}		180
9	29		{}		180
10	27		{}	-90
11	21		{}		-30
12	22		{}		-30
13	22		{}		-30
14	18		{}		-30
15	22		{}		-30
16	16		{}		-30
17	17		{}		-30
18	16		{}		-30
19	15		{}		-30
20	11		{}		-30
	};
    \addlegendentry{GAT$_{1h}$ train}
    
\addplot[color=color1, mark options={solid, fill=color1, draw=black}, mark=*, line width=0.5pt, mark size=2pt, visualization depends on=\thisrow{alignment} \as \alignment, nodes near coords, point meta=explicit symbolic,
    every node near coord/.style={anchor=\alignment, font=\scriptsize}] 
table [meta index=2]  {
x   y       label   alignment
4	45		{}		-129
5	49		{}		-90
6	43		{}		-90
7	41		{}		-60
8	32		{}		180
9	26		{}		180
10	20		{}	-90
11	18		{}		-30
12	17		{}		-30
13	21		{}		-30
14	17		{}		-30
15	16		{}		-30
16	13		{}		-30
17	14		{}		-30
18	15		{}		-30
19	13		{}		-30
20	11		{}		-30
	};
    \addlegendentry{GAT$_{1h}$ test}

	\end{axis}
\end{tikzpicture}
\caption{The \kchoose{} problem: \newgat{} easily achieves 100\% train and test accuracies even for $k$$=$$100$ and using only a single head.}
\label{fig:kchoose-results}
\end{minipage}
\vspace{-4mm}
\end{figure*}
\subsection{Synthetic Benchmark: \kchoose{}}
\label{subsec:kchoose}
The \kchoose{} problem is a contrived problem that we designed to test the ability of a GNN architecture to perform dynamic attention. Here, we demonstrate that GAT cannot learn this simple problem.
\Cref{fig:kchoose} shows a complete bipartite graph of $2k$ nodes. 
Each ``key node'' in the bottom row has an \emph{attribute} ($\set{\mathrm{A,B,C,...}}$) and a \emph{value} ($\set{\mathrm{1,2,3,...}}$). 
Each ``query node'' in the upper row has \emph{only an attribute} ($\set{\mathrm{A,B,C,...}}$). %
The goal is to predict the value of every query node (upper row), according to its attribute. 
Each graph in the dataset has a different mapping from attributes to values.
We created a separate dataset for each $k=\set{1,2,3,...}$, for which we trained a different model, and measured per-node accuracy. 

Although this is a contrived problem, it is relevant to any subgraph with keys that share more than one query, and each query needs to attend to the keys differently. Such subgraphs are very common in a variety of real-world domains. 
This problem tests the layer itself because it can be solved using a \emph{single} GNN layer, without suffering from multi-layer side-effects such as over-smoothing \cite{li2018deeper}, over-squashing \cite{alon2021bottleneck}, or vanishing gradients \cite{li2019deepgcns}.
Our %
code will be made publicly available, to serve as a testbed for future graph attention mechanisms.

\para{Results} \Cref{fig:kchoose-results} shows the following surprising results: GAT with a single head (GAT$_{1h}$) failed to fit the \emph{training} set for any value of $k$, no matter for how many iterations it was trained, and after trying various training methods. Thus, it expectedly fails to generalize (resulting in low test accuracy).
Using 8 heads, GAT$_{8h}$ successfully fits the \emph{training} set, but generalizes \emph{poorly} to the \emph{test} set.
In contrast, \newgat{} easily achieves 100\% training and 100\% test accuracies for any value of $k$, and even for $k$$=$$100$ (not shown) and using a \emph{single head}, thanks to its ability to perform dynamic attention.
These results clearly show the limitations of GAT, which are easily solved by \newgat{}.
An additional comparison to GIN, which could \emph{not} fit this dataset, is provided in \Cref{fig:kchoose-results-additional} in \Cref{sec:additional-kchoose}.

\para{Visualization}
\Cref{fig:gat-heatmap} (top) shows a heatmap of GAT's attention scores in this \kchoose{} problem. As shown, all query nodes $q0$ to $q9$ 
attend mostly to the eighth key ($k8$), and have the same ranking of attention coefficients (\Cref{fig:gat-heatmap} (bottom)). In contrast, \Cref{fig:newgat-heatmap} shows how \newgat{} can \emph{select} a different key node for every query node, because it computes dynamic attention.

\para{The role of multi-head attention}
\citet{velic2018graph} found the role of multi-head attention to be stabilizing the learning process.
Nevertheless, \Cref{fig:kchoose-results} shows that increasing the number of heads 
strictly increases training accuracy, and thus, the expressivity. 
Thus, GAT \emph{depends} on having multiple attention heads. In contrast, even a \emph{single} \newgat{} head generalizes better than a multi-head GAT.

\subsection{Robustness to Noise}
\label{subsec:robustness}
We examine the robustness of \emph{dynamic} and \emph{static} attention to noise. In particular, we focus on structural noise:   %
given an input graph $\mathcal{G}$$\,=\,$$(\mathcal{V},\mathcal{E})$ and a noise ratio $0$$\,\le\,$$ p  $$\,\le\,$$1$, we randomly sample $\abs{\mathcal{E}}$$\times$$p$ non-existing edges $\mathcal{E}'$ from 
$\mathcal{V}$$\times$$\mathcal{V}$$\setminus$$\mathcal{E}$.
We then %
train the GNN on the noisy graph
$\mathcal{G}'$$=$$(\mathcal{V},\mathcal{E}\cup\mathcal{E}')$. %

\newcommand{\noisefigsacle}{1} 
\newcommand{\noisefigheight}{60mm} 
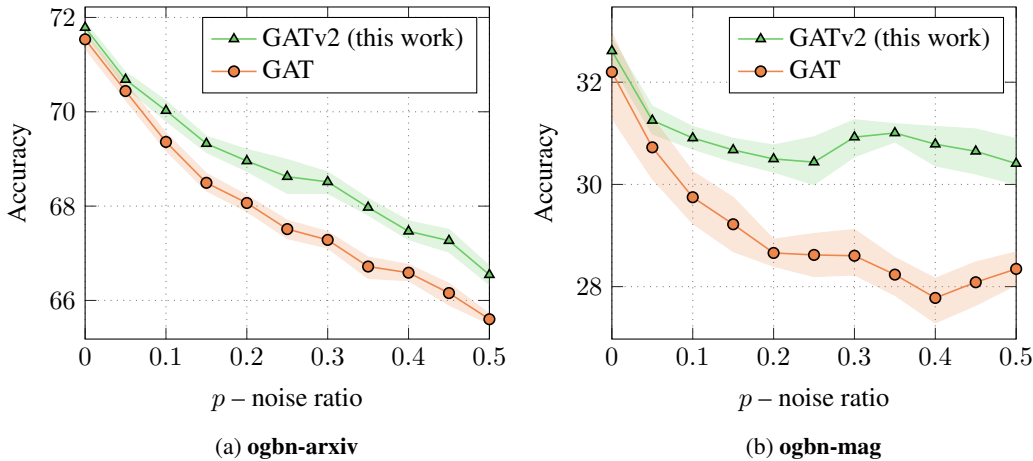
\begin{figure*}[h!]
    \centering
    \begin{subfigure}[b]{.54\textwidth}
		\centering
        \begin{tikzpicture}[trim axis left,trim axis right, scale=\noisefigsacle]

            \definecolor{color0}{rgb}{0.917647058823529,0.917647058823529,0.949019607843137}
            \definecolor{color1}{rgb}{0.282352941176471,0.470588235294118,0.815686274509804}
            \definecolor{color2}{rgb}{0.933333333333333,0.52156862745098,0.290196078431373}
            \definecolor{color3}{rgb}{0.415686274509804,0.8,0.392156862745098}
            
            \begin{axis}[
            axis line style={black},
            legend cell align={left},
            legend style={
			},
            legend pos=north east,
            legend entries={{\newgat} (this work),{GAT}},
            tick align=inside,
            tick pos=both,
            grid style={dotted, gray},
		    xlabel={$p$ -- noise ratio},
            xmajorgrids,
            xmin=-0.0, xmax=0.5,
            xtick style={color=white!15!black},
            ylabel={Accuracy},
            ymajorgrids,
            ymin=65.1810334331438, ymax=72.2220649577313,
            ytick style={color=white!15!black},
            ylabel near ticks,
			height=\noisefigheight,
            ]

            \path [draw=color3, fill=color3, opacity=0.2]
            (axis cs:0,71.90201807025)
            --(axis cs:0,71.655834636175)
            --(axis cs:0.05,70.514165497)
            --(axis cs:0.1,69.80446233675)
            --(axis cs:0.15,69.1386837789)
            --(axis cs:0.2,68.738708914725)
            --(axis cs:0.25,68.2689663314)
            --(axis cs:0.3,68.266126652025)
            --(axis cs:0.35,67.809358043175)
            --(axis cs:0.4,67.28881729125)
            --(axis cs:0.45,67.025620174775)
            --(axis cs:0.5,66.3337175726)
            --(axis cs:0.5,66.73437362935)
            --(axis cs:0.5,66.73437362935)
            --(axis cs:0.45,67.506019459275)
            --(axis cs:0.4,67.687276551675)
            --(axis cs:0.35,68.1653440689)
            --(axis cs:0.3,68.72463998525)
            --(axis cs:0.25,68.988544999675)
            --(axis cs:0.2,69.20296705065)
            --(axis cs:0.15,69.482360612)
            --(axis cs:0.1,70.2378646482)
            --(axis cs:0.05,70.8456386014)
            --(axis cs:0,71.90201807025)
            --cycle;
        
            \path [draw=color2, fill=color2, opacity=0.2]
            (axis cs:0,71.690460548925)
            --(axis cs:0,71.336918506325)
            --(axis cs:0.05,70.247515105)
            --(axis cs:0.1,69.177359618825)
            --(axis cs:0.15,68.323553693375)
            --(axis cs:0.2,67.8844814293)
            --(axis cs:0.25,67.31310886385)
            --(axis cs:0.3,67.110750140475)
            --(axis cs:0.35,66.46680670075)
            --(axis cs:0.4,66.4188961789)
            --(axis cs:0.45,65.9022476761)
            --(axis cs:0.5,65.501080320625)
            --(axis cs:0.5,65.72084407885)
            --(axis cs:0.5,65.72084407885)
            --(axis cs:0.45,66.362788716425)
            --(axis cs:0.4,66.762989407975)
            --(axis cs:0.35,66.918991967375)
            --(axis cs:0.3,67.459853879)
            --(axis cs:0.25,67.701022451975)
            --(axis cs:0.2,68.230992887375)
            --(axis cs:0.15,68.694935606)
            --(axis cs:0.1,69.55477395875)
            --(axis cs:0.05,70.615602721025)
            --(axis cs:0,71.690460548925)
            --cycle;
            
            \addplot [semithick, color3, mark=triangle*, mark options={draw=black}]
            table {%
            0 71.785281372
            0.05 70.68308487
            0.1 70.024689484
            0.15 69.324938203
            0.2 68.957263945
            0.25 68.624158478
            0.3 68.514289092
            0.35 67.972553253
            0.4 67.466616057
            0.45 67.266835785
            0.5 66.544245148
            };
            \addplot [semithick, color2, mark=*, mark options={draw=black}]
            table {%
            0 71.535913086
            0.05 70.439889525
            0.1 69.360738372
            0.15 68.494125365
            0.2 68.066168212
            0.25 67.512911225
            0.3 67.283707428
            0.35 66.718926241
            0.4 66.587866975
            0.45 66.156616211
            0.5 65.601711273
            };
            \end{axis}
        \end{tikzpicture}
        \caption{\textbf{ogbn-arxiv}}
        \label{fig:noise-arxiv}
    \end{subfigure}
	\begin{subfigure}[b]{.45\textwidth}
        \centering
\begin{tikzpicture}[trim axis left,trim axis right, scale=\noisefigsacle]

    \definecolor{color0}{rgb}{0.917647058823529,0.917647058823529,0.949019607843137}
    \definecolor{color1}{rgb}{0.282352941176471,0.470588235294118,0.815686274509804}
    \definecolor{color3}{rgb}{0.933333333333333,0.52156862745098,0.290196078431373}
    \definecolor{color2}{rgb}{0.415686274509804,0.8,0.392156862745098}
    
    \begin{axis}[
    axis line style={black},
    legend cell align={left},
    legend style={
      at={(0.03,0.03)},
      anchor=south west,
    },
    legend pos=north east,
    legend entries={{\newgat} (this work),{GAT}},
    tick align=inside,
    tick pos=both,
    grid style={dotted, gray},
    xlabel={$p$ -- noise ratio},
    xmajorgrids,
    xmin=-0.0, xmax=0.5,
    xtick style={color=white!15!black},
    ylabel={Accuracy},
    ymajorgrids,
    ymin=26.973718350705, ymax=33.472489923295,
    ytick style={color=white!15!black},
    ylabel near ticks,
	height=\noisefigheight,
    ]

    \path [draw=color2, fill=color2, opacity=0.2]
    (axis cs:0,32.863074599925)
    --(axis cs:0,32.3462407976)
    --(axis cs:0.05,30.986646844175)
    --(axis cs:0.1,30.690734253225)
    --(axis cs:0.15,30.44085464415)
    --(axis cs:0.2,30.241738361)
    --(axis cs:0.25,29.995148174775)
    --(axis cs:0.3,30.5407432914)
    --(axis cs:0.35,30.8314095492)
    --(axis cs:0.4,30.363533839575)
    --(axis cs:0.45,30.20346797815)
    --(axis cs:0.5,30.0006195131)
    --(axis cs:0.5,30.897035367925)
    --(axis cs:0.5,30.897035367925)
    --(axis cs:0.45,31.08495694525)
    --(axis cs:0.4,31.134814459325)
    --(axis cs:0.35,31.187778891325)
    --(axis cs:0.3,31.265170865775)
    --(axis cs:0.25,30.93118051835)
    --(axis cs:0.2,30.770959538125)
    --(axis cs:0.15,30.9003912823)
    --(axis cs:0.1,31.1338788615)
    --(axis cs:0.05,31.531665142625)
    --(axis cs:0,32.863074599925)
    --cycle;
    
    \path [draw=color3, fill=color3, opacity=0.2]
    (axis cs:0,32.967684883475)
    --(axis cs:0,31.295423451175)
    --(axis cs:0.05,30.106856997925)
    --(axis cs:0.1,29.2344302389)
    --(axis cs:0.15,28.6856863089)
    --(axis cs:0.2,28.39808987305)
    --(axis cs:0.25,28.195182351725)
    --(axis cs:0.3,28.22891662305)
    --(axis cs:0.35,27.829633320925)
    --(axis cs:0.4,27.289688334275)
    --(axis cs:0.45,27.6331881)
    --(axis cs:0.5,28.032070717125)
    --(axis cs:0.5,28.6780799888)
    --(axis cs:0.5,28.6780799888)
    --(axis cs:0.45,28.48289803185)
    --(axis cs:0.4,28.16217566325)
    --(axis cs:0.35,28.571519478375)
    --(axis cs:0.3,29.118154259525)
    --(axis cs:0.25,29.0393842354)
    --(axis cs:0.2,28.928038208025)
    --(axis cs:0.15,29.747818347025)
    --(axis cs:0.1,30.254184573725)
    --(axis cs:0.05,31.30647454215)
    --(axis cs:0,32.967684883475)
    --cycle;

    \addplot [semithick, color2, mark=triangle*, mark options={draw=black}]
    table {%
    0 32.612126924
    0.05 31.252056694
    0.1 30.908223534
    0.15 30.674074174
    0.2 30.498343276
    0.25 30.436348342
    0.3 30.92515297
    0.35 31.006700324
    0.4 30.787095261
    0.45 30.648323062
    0.5 30.408688924
    };

    \addplot [semithick, color3, mark=*, mark options={draw=black}]
    table {%
    0 32.200099946
    0.05 30.725577163
    0.1 29.749159623
    0.15 29.22029648
    0.2 28.658766176
    0.25 28.620138549
    0.3 28.604163742
    0.35 28.234816741
    0.4 27.778678323
    0.45 28.085552979
    0.5 28.346407889
    };
    \end{axis}
    
    \end{tikzpicture}
     
        \caption{\textbf{ogbn-mag}}
        \label{fig:noise-mag}
    \end{subfigure}
    \caption{
    Test accuracy compared to the noise ratio:
    \newgat{} is more robust to structural noise compared to GAT. 
    Each point is an average of 10 runs, error bars show standard deviation.}
    \label{fig:noise}
\end{figure*}

\para{Results}
\Cref{fig:noise} shows the accuracy on two node-prediction datasets from the Open Graph Benchmark \cite[OGB; ][]{hu2020open} %
as a function of the noise ratio $p$. 
As $p$ increases, all models show a natural decline in test accuracy in both datasets. 
Yet, thanks to their ability to compute \emph{dynamic} attention, \newgat{} shows a milder degradation in accuracy compared to GAT, which 
shows a steeper descent. %
We hypothesize that 
the ability to perform \emph{dynamic} attention helps the models 
distinguishing between given data edges ($\mathcal{E}$) and noise edges ($\mathcal{E'}$);
in contrast, GAT cannot distinguish between edges, because it scores the source and target nodes separately. 
These results clearly demonstrate the \emph{robustness} of \emph{dynamic} attention over \emph{static} attention in noisy settings, which are common in reality. %

\subsection{Programs: {\sc{VarMisuse}}}
\label{subsec:varmisuse}
\para{Setup} {\sc{VarMisuse}} \cite{allamanis2018learning} 
is an inductive node-pointing problem that depends on 11 %
types of syntactic and semantic interactions between elements %
in computer programs.

We used the framework of \citet{brockschmidt2019graph}, who performed an extensive hyperparameter tuning
by searching over 30 configurations for every GNN type. 
We took their best GAT hyperparameters and used them to train \newgat{}, without further tuning.

\para{Results} As shown in \Cref{tab:varmisuse-results}, \newgat{} is more accurate than GAT and other GNNs in the SeenProj test sets. Furthermore, \newgat{} achieves an even higher improvement in the \emph{Unseen}Proj test set. 
Overall,
these results demonstrate the power of \newgat{}  in modeling complex relational problems, especially since it outperforms extensively tuned models, without any further 
tuning by us.

\begin{figure}[h!]
    \centering
        \caption{Accuracy (5 runs$\pm$stdev) on {\sc{VarMisuse}}. 
    \newgat{} is more accurate than all GNNs in both test sets, using GAT's hyperparameters. %
    ${\dagger}$ previously reported by \citet{brockschmidt2019graph}.
    }%
    \footnotesize
    \setlength\tabcolsep{5 pt}
        \begin{tabular}{@{}llcc@{}}
        \toprule
         & Model & \multicolumn{1}{c}{SeenProj} & \multicolumn{1}{c}{UnseenProj} \\
         \midrule
		 \multirow{2}{*}{\shortstack[c]{No-\\Attention}}& GCN$^{\dagger}$ & 87.2\err{1.5}&  81.4\err{2.3}\\
		 & GIN$^{\dagger}$ & 87.1\err{0.1}&  81.1\err{0.9}\\
         \midrule
          \multirow{2}{*}{Attention}         
          & GAT$^{\dagger}$	   & 86.9\err{0.7}& 81.2\err{0.9}\\
          & \newgat{}	   & \textbf{88.0}\err{1.1}& \textbf{82.8}\err{1.7}\\
\bottomrule
    \end{tabular}
    \label{tab:varmisuse-results}
\end{figure}

\subsection{Node-Prediction}
\label{subsec:node}
We further compare  \newgat{}, GAT, and other GNNs on four node-prediction datasets from OGB.

\begin{table*}[h!]
    \centering
	\caption{Average accuracy (\Cref{tab:node-ogbn-results}) and ROC-AUC (\Cref{tab:proteins-results}) in node-prediction datasets (10 runs$\pm$std). In all  datasets, \newgat{} outperforms GAT.
	${\dagger}$ -- previously reported by \citet{hu2020open}.
	} 
	\begin{subtable}{0.8\textwidth}
    		\caption{}
	\centering
    \footnotesize
    \begin{tabu}{llccc}
        \toprule
        Model & Attn. Heads     & \textbf{ogbn-arxiv}                           & \textbf{ogbn-products}                & \textbf{ogbn-mag}
        \\ 
        \midrule
        GCN$^{\dagger}$ & 0 & 71.74\err{0.29} & 78.97\err{0.33} & 30.43\err{0.25} \\
        GraphSAGE$^{\dagger}$ & 0 & 71.49\err{0.27} & 78.70\err{0.36} & 31.53\err{0.15} \\
        \midrule
        \multirow{2}{*}{\shortstack[c]{GAT}} 
        & 1      & 71.59\err{0.38}                   & 79.04\err{1.54}          & 32.20\err{1.46}  \\ 
        & 8     & 71.54\err{0.30}                   & 77.23\err{2.37}          & 31.75\err{1.60}  \\ 
        \midrule
        \multirow{2}{*}{\shortstack[c]{\newgat{} (this work)}} 

        & 1  & 71.78\err{0.18}          & \textbf{80.63}\err{0.70} & \textbf{32.61}\err{0.44} \\ 
        & 8   & \textbf{71.87}\err{0.25}          & 78.46\err{2.45}           & 32.52\err{0.39}  \\ 
        \bottomrule
        \end{tabu}
        \label{tab:node-ogbn-results}
	\end{subtable}
	\begin{subtable}{0.18\textwidth}
		\caption{}
    \centering
    \footnotesize
    \begin{tabu}{c}
        \toprule
		\textbf{ogbn-proteins}\\ 
        \midrule
        72.51\vphantom{$^{\dagger}$}\err{0.35}  \\ %
        77.68\vphantom{$^{\dagger}$}\err{0.20} \\ %
        \midrule
		70.77\err{5.79}   \\ 
        78.63\err{1.62} \\ 
        \midrule

        77.23\err{3.32} \\ 
        \textbf{79.52}\err{0.55}      \\
        \bottomrule
        \end{tabu}
    
        \label{tab:proteins-results}
	\end{subtable}

	\label{tab:node-ogb}
\end{table*}

\para{Results}
Results are shown in \Cref{tab:node-ogb}.
In all settings and all datasets, \newgat{} is more accurate than GAT and the non-attentive GNNs. 
Interestingly, in the datasets of \Cref{tab:node-ogbn-results}, \emph{even a single head of \newgat{} outperforms GAT with 8 heads}.
In \Cref{tab:proteins-results} (\textbf{ogbn-proteins}), increasing the number of heads results in a major improvement for GAT (from 70.77 to 78.63), while \newgat{} already gets most of the benefit using a single attention head. 
These results demonstrate the superiority of \newgat{} over GAT in node prediction (and even with a single head), thanks to \newgat{}'s dynamic attention.

\subsection{Graph-Prediction: QM9}
\label{subsec:graph}
\para{Setup} 
In the QM9 dataset \cite{ramakrishnan2014quantum, gilmer2017neural},
each graph is a molecule %
and the goal is to regress each graph to 13 real-valued quantum chemical properties.
We used the implementation of \citet{brockschmidt2019graph} who performed an extensive hyperparameter search over 500 configurations; we took their best-found configuration of GAT to implement \newgat{}.

\newcommand{\qmwidth}{4mm}

\begin{table*}[h!]
    \caption{Average error rates (lower is better), 5 runs for each property, on the QM9 dataset. 
    The best result among GAT and \newgat{} is marked in \textbf{bold}; the globally best result among all GNNs is  marked in \textbf{\uline{bold and underline}}.
        $\dagger$ was previously tuned and reported by \citet{brockschmidt2019graph}.
    }
    \centering
    \footnotesize
    \begin{tabu}{lR{\qmwidth}R{\qmwidth}R{\qmwidth}R{\qmwidth}R{\qmwidth}R{\qmwidth}R{\qmwidth}R{\qmwidth}R{\qmwidth}R{\qmwidth}R{\qmwidth}R{\qmwidth}R{\qmwidth}|r@{}}
        \toprule
		& \multicolumn{13}{c}{Predicted Property} & Rel. to \\
		Model & \multicolumn{1}{c}{1} & \multicolumn{1}{c}{2} & \multicolumn{1}{c}{3} & \multicolumn{1}{c}{4} & \multicolumn{1}{c}{5} & \multicolumn{1}{c}{6} & \multicolumn{1}{c}{7} & \multicolumn{1}{c}{8} & \multicolumn{1}{c}{9} & \multicolumn{1}{c}{10} & \multicolumn{1}{c}{11} & \multicolumn{1}{c}{12} & \multicolumn{1}{c}{13} &  GAT \\
		\midrule
		\normalsize{GCN$^\dagger$} & 3.21 & \textbf{\uline{4.22}} & 1.45 & 1.62 & 2.42 & 16.38 & 17.40 & 7.82 & 8.24 & 9.05 & 7.00 & 3.93 & \textbf{\uline{1.02}} & -1.5\% \\
		\normalsize{GIN$^\dagger$} & \textbf{\uline{2.64}} & 4.67 & 1.42 & 1.50 & \textbf{\uline{2.27}} & \textbf{\uline{15.63}} & \textbf{\uline{12.93}} & \textbf{\uline{5.88}} & 18.71 & \textbf{\uline{5.62}} & \textbf{\uline{5.38}} & \textbf{\uline{3.53}} & 1.05 & -2.3\% \\
		\midrule

		\normalsize{GAT$^\dagger$} & 2.68 & 4.65 & 1.48 & 1.53 & 2.31 & 52.39 & 14.87 & 7.61 & 6.86 & 7.64 & 6.54 & 4.11 & \textbf{1.48} & +0\% \\
		\normalsize{\newgat{}} & \textbf{2.65} & {\textbf{4.28}} & \textbf{\uline{1.41}} & \textbf{\uline{1.47}} & \textbf{2.29} & {\textbf{16.37}} & \textbf{14.03} & \textbf{6.07} & \textbf{\uline{6.28}} & {\textbf{6.60}} & {\textbf{5.97}} & {\textbf{3.57}} & 1.59 & \textbf{\uline{-11.5}\%} \\		
        \bottomrule
    \end{tabu}

    \label{tab:qm-results}
\end{table*} 

\para{Results}
\Cref{tab:qm-results} shows the main results: \newgat{} achieves a lower (better) average error than GAT, by 11.5\% relatively. GAT achieves the overall highest average error. 
In some properties, the non-attentive GNNs, GCN and GIN, perform best. We hypothesize that attention is not needed in modeling these properties. 
Generally, %
\newgat{} achieves the lowest overall average relative error (rightmost column).

\subsection{Link-Prediction}
\label{subsec:link}
We compare \newgat{}, GAT, and other GNNs in link-prediction datasets from OGB. %

\begin{table*}[h!]
\caption{Average Hits@50 %
(\Cref{tab:collab-results}) and mean reciprocal rank (MRR) (\Cref{tab:citation2-results}) in link-prediction benchmarks from OGB (10 runs$\pm$std). 
The best result among GAT and \newgat{} is marked in \textbf{bold}; the best result among all GNNs is marked in \textbf{\uline{bold and underline}}. 
${\dagger}$ was reported by \citet{hu2020open}. 
}
    \centering
	\begin{subtable}{0.6\textwidth}
    \caption{}
    \centering
    \footnotesize
    \begin{tabu}{llcc}
        \toprule
        \rowfont{\footnotesize}
            & & \multicolumn{2}{c}{\textbf{ogbl-collab}}\\
       Model & Attn. Heads & \multicolumn{1}{c}{w/o val edges}& \multicolumn{1}{c}{w/ val edges} \\
        \midrule
        \multirow{2}{*}{\shortstack[c]{No-\vphantom{$^{\dagger}$}\\Attention}}
        & GCN$^{\dagger}$ & 44.75\err{1.07} & 47.14\err{1.45} \\
        & GraphSAGE$^{\dagger}$ & \textbf{\uline{48.10}}\err{0.81} & \textbf{\uline{54.63}}\err{1.12} \\
        \midrule
        \multirow{2}{*}{\shortstack[c]{GAT}} 
        & \normalsize{GAT$_{1h}$} & 39.32\err{3.26} & 48.10\err{4.80} \\
        & \normalsize{GAT$_{8h}$} & 42.37\err{2.99} & 46.63\err{2.80} \\
        \multirow{2}{*}{\shortstack[c]{\newgat{}}} 
        & \normalsize{\newgat{}$_{1h}$} & 42.00\err{2.40} & 48.02\err{2.77} \\
        & \normalsize{\newgat{}$_{8h}$} & \textbf{42.85}\err{2.64} & \textbf{49.70}\err{3.08} \\
        \bottomrule
    \end{tabu}
    \label{tab:collab-results}
	\end{subtable}
	\begin{subtable}{0.15\textwidth}
        \caption{}
    \centering
    \footnotesize
    \begin{tabular}{c}
        \toprule
        \textbf{ogbl-citation2}\\ 
        \\
        \midrule
        80.04\vphantom{$^{\dagger}$}\err{0.25} \\
        \textbf{\uline{80.44}}\vphantom{$^{\dagger}$}\err{0.10} \\
        \midrule
        79.84\err{0.19} \\ 
        75.95\err{1.31} \\ 
        80.33\err{0.13} \\ 
        \textbf{80.14}\err{0.71} \\ 
        \bottomrule
        \end{tabular}
         \label{tab:citation2-results}
	\end{subtable}

\label{tab:link-results}
\end{table*}

\para{Results}
\Cref{tab:link-results} shows that 
in 
all datasets, \newgat{} achieves a higher MRR than GAT, which achieves the lowest MRR.
However, the non-attentive GraphSAGE performs better than all attentive GNNs. We hypothesize that attention might not be needed in these datasets. Another possibility is that dynamic attention is especially useful in graphs that have \emph{high node degrees}: in \textbf{ogbn-products} and \textbf{ogbn-proteins} (\Cref{tab:node-ogb}) the average node degrees are 50.5 and 597, respectively (see \Cref{tab:stat-ogb} in \Cref{sec:stats}). \textbf{ogbl-collab} and \textbf{ogbl-citation2} (\Cref{tab:link-results}), however, have much lower average node degrees -- of 8.2 and 20.7. We hypothesize that a dynamic attention mechanism is especially useful to select the most relevant neighbors when the total number of neighbors is high. We leave the study of the effect of the datasets's average node degrees on the optimal GNN architecture for future work.

\subsection{Discussion}
In \emph{all} examined benchmarks, we found that \emph{\newgat{} is more accurate than GAT}. 
Further, we found that \newgat{} is significantly more robust to noise than GAT.
In the synthetic \kchoose{} benchmark (\Cref{subsec:kchoose}), GAT fails to express the data, and thus achieves even poor \emph{training} accuracy.

In few of the benchmarks (\Cref{tab:link-results} and some of the properties in \Cref{tab:qm-results}) -- a non-attentive model such as GCN or GIN  achieved a higher accuracy than all GNNs that do use attention. %

\para{Which graph attention mechanism should I use?} 
It is usually impossible to determine in advance which architecture would perform best. A theoretically weaker model may perform better in practice, because a stronger model might overfit the training data if the task is ``too simple'' and does not require such expressiveness.
Intuitively, we believe that the 
more complex the interactions between nodes are -- the more benefit a GNN can take from theoretically stronger graph attention mechanisms such as \newgat{}.
The main question is whether the problem has a \emph{global ranking} of ``influential'' nodes (GAT is sufficient), 
or do different nodes have \emph{different rankings} of  neighbors (use \newgat{}).

\citeauthor{veliconly}, the author of GAT, has confirmed on Twitter
\footnote{\url{https://twitter.com/PetarV_93/status/1399685979506675714}}   
 that GAT was designed to work in the ``easy-to-overfit'' datasets of the time (2017), such as Cora, Citeseer and Pubmed \cite{sen2008collective}, where the data might had an underlying static ranking of ``globally important'' nodes. 
\citeauthor{veliconly} agreed that newer and more challenging benchmarks may 
demand stronger attention mechanisms such as \newgat{}.
In this paper, we revisit the traditional assumptions and show that many modern graph benchmarks and datasets contain more complex interactions, and thus  \emph{require dynamic attention}. %

\section{Related Work}
\label{sec:related}

\para{Attention in GNNs}
Modeling pairwise interactions between elements in graph-structured data goes back to interaction networks \cite{battaglia2016interaction,hoshen2017vain} and relational networks \cite{santoro2017simple}.
The GAT formulation of \citet{velic2018graph} rose as the most popular framework for attentional GNNs, thanks to its simplicity, generality, and
applicability beyond reinforcement learning 
\cite{denil2017programmable, duan2017one}.
Nevertheless, in this work, we show that the popular and widespread definition of GAT is severely constrained to static attention only. %

\para{Other graph attention mechanisms}
Many works employed GNNs with attention mechanisms other than the standard GAT's \cite{zhang2018gaan,thekumparampil2018attention,gao2019graph, lukovnikov2021gated, shi2020masked, dwivedi2020generalization, busbridge2019relational,rong2020self,velivckovic2020pointer}, 
and \citet{lee2018attention} conducted an extensive survey of attention types in GNNs.
However, none of these works identified the monotonicity of GAT's attention mechanism, the theoretical differences between attention types, nor empirically compared their performance.
\citet{kim2021how} compared two graph attention mechanisms empirically, but in a specific self-supervised scenario, without observing the theoretical difference in their expressiveness.

\para{The static attention of GAT}
\citet{qiu2018deepinf} recognized the order-preserving property of GAT, but did not identify the severe theoretical constraint that this property implies: the inability to perform dynamic attention (\Cref{theorem:monotonic}). 
Furthermore, they presented GAT's monotonicity as a \emph{desired} trait~(!) %
To the best of our knowledge, our work is the first work to recognize the inability of GAT to perform dynamic attention and its practical harmful consequences.
\section{Conclusion}
\label{sec:conclusion}
In this paper, we identify that the popular and widespread Graph Attention Network does not compute \emph{dynamic} attention. Instead, the attention mechanism in the standard definition and implementations of GAT is only \emph{static}: for any query, its neighbor-scoring is monotonic with respect to per-node scores.
As a result, GAT cannot even express simple alignment problems.
To address this limitation, we introduce a simple fix and propose \newgat{}: by modifying the order of operations in GAT, \newgat{} achieves a universal approximator attention function and is thus strictly more powerful than GAT.

We demonstrate the empirical advantage of \newgat{} over GAT in a synthetic problem that requires~dynamic selection of nodes, and in 11 benchmarks from OGB and other public datasets. Our experiments show that \newgat{} outperforms GAT in all benchmarks while having the same parametric cost.

We encourage the community to use \newgat{} instead of GAT whenever comparing new GNN architectures to the common strong baselines. 
In complex tasks and domains and in challenging datasets, a model that uses GAT as an internal component can replace it with \newgat{} to benefit from a strictly more powerful model.
To this end, we make our code publicly available at \url{https://github.com/tech-srl/how_attentive_are_gats} , and \newgat{} is available as part of the PyTorch Geometric library, the Deep Graph Library, and TensorFlow GNN. 
An annotated implementation is available at \url{https://nn.labml.ai/graphs/gatv2/} .
\section*{Acknowledgments}

We thank Gail Weiss for the helpful discussions, thorough feedback, and inspirational paper \cite{weiss2018practical}. 
We also thank Petar Veli{\v{c}}kovi{\'c} for the useful discussion about the complexity and implementation of GAT.

\begin{small}
	\bibliography{bib}
\end{small}

\newpage
\appendix
\section{Proof for \Cref{theorem:dynamic}}
\label{sec:proof-dynamic}
For brevity, we repeat our definition of dynamic attention (\Cref{def:dynamic}):

\paragraph{Definition \getrefnumber{def:dynamic}}(Dynamic attention).

\addtocounter{theorem}{-1}

\begin{theorem}
	
\end{theorem}

\begin{proof} 
	Let $\mathcal{G}=\left(\mathcal{V},\mathcal{E}\right)$ be a graph modeled by a \newgat{} layer, having node representations $\set{\vh_1,..., \vh_n}$, and
let $\varphi : \intn \rightarrow \intn$ be any node mapping $\intn \rightarrow \intn$. %
	We define $g: \mathbb{R}^{2d} \rightarrow \mathbb{R}$ as follows:
	\begin{equation}
		g\left(\vx\right) = 
		\begin{cases}
			1 & \exists i: \vx=\left[\vh_i \|\vh_{\varphi\left(i\right)}\right] \\
			0 & \textrm{otherwise}
		\end{cases}
	\end{equation}

	Next, we define a \emph{continues} function $\widetilde{g}: \mathbb{R}^{2d} \rightarrow \mathbb{R}$ that equals to $g$ in only specific $n^2$ inputs:
	\begin{equation}
		\widetilde{g}(\left[\vh_i \| \vh_j\right]) = g(\left[\vh_i \| \vh_j\right]), \forall i,j\in\intn
	\end{equation}
	For all other inputs $x\in\mathbb{R}^{2d}$, $\widetilde{g}(x)$ realizes to any values that maintain the continuity of $\widetilde{g}$ (this is possible because we fixed the values of $\widetilde{g}$ for only a finite set of points). 
\footnote{
The function $\widetilde{g}$ is a function that we define for the ease of proof, because the universal approximation theorem is defined for continuous functions, and we only need the scoring function of \newgat{} $e$ to approximate the mapping $\varphi$ in a finite set of points. So, we need the attention function $e$ to approximate $g$ (from Equation 8) in some specific points. But, since $g$ is not continuous, we define $\widetilde{g}$ and use the universal approximation theorem for $\widetilde{g}$. Since $e$ approximates $\widetilde{g}$, $e$ also approximates $g$ in our specific points, as a special case.
We only require that $\widetilde{g}$ will be identical to $g$ in specific $n^2$ points $\set{ \left[h_i \| h_j\right] \mid i,j\in \intn}$. For the rest of the input space, we don't have any requirement on the value of $\widetilde{g}$, except for maintaining the continuity of $\widetilde{g}$. 
There exist infinitely many such possible $\widetilde{g}$ for every given set of keys, queries and a mapping $\varphi$, but the concrete functions are not needed for the proof. } 

Thus, for every node $i\in\mathcal{V}$ and  $j_{\neq \varphi\left(i\right)} \in \mathcal{V}$: %
\begin{equation}
1=\widetilde{g}\left(\left[\vh_i \|\vh_{\varphi\left(i\right)}\right]\right) > \widetilde{g}\left(\left[\vh_i \|\vh_{j}\right]\right)=0	
\end{equation}

If we concatenate the two input vectors, and define the scoring function $e$ of \newgat{} (\Cref{eq:gat2-vs-gat}) as a function of the concatenated vector $\left[\vh_{i} \| \vh_{j}\right]$,
from the universal approximation theorem \cite{hornik1989multilayer, cybenko1989approximation, funahashi1989approximate,hornik1991approximation}, $e$  can approximate $\widetilde{g}$ for any compact subset of $\mathbb{R}^{2d}$. 

Thus, for any sufficiently small $\epsilon$ (any $0<\epsilon<\nicefrac{1}{2}$) there exist parameters $\mW$ and $\va$ such that
for every node $i\in\mathcal{V}$ and every $j_{\ne \varphi\left(i\right)}$:
\begin{equation}
	e_{\mW,\va}\left(\vh_i, \vh_{\varphi\left(i\right)}\right) > 1 - \epsilon > 0 + \epsilon >
	  e_{\mW,\va}\left(\vh_i, \vh_{j}\right)
	\end{equation}
and due to the increasing monotonicity of $\mathrm{softmax}$:
\begin{equation}
	\alpha_{i,\varphi\left(i\right)}  > \alpha_{i,j} 
\end{equation}
\end{proof}

\paragraph{The choice of nonlinearity}
In general, these results hold if \newgat{} had used any common non-polynomial activation function (such as ReLU, sigmoid, or the hyperbolic tangent function).
The LeakyReLU activation function of \newgat{} does not change its universal approximation ability
\cite{leshno1993multilayer,pinkus1999approximation,park2021minimum}, and it was chosen only for consistency with the original definition of GAT.

\section{Training details}
\label{sec:training}
In this section we elaborate on the training details of all of our experiments.
All models use residual connections as in \citet{velic2018graph}. 
All used code and data are publicly available under the MIT license.

\subsection{Node- and Link-Prediction}
We used the provided splits of OGB \cite{hu2020open} and the Adam optimizer. We tuned the following hyperparameters: number of layers $\in\{2,3,6\}$, hidden size $\in\{64, 128, 256\}$, learning rate $\in\{0.0005, 0.001, 0.005, 0.01\}$ and sampling method -- full batch, GraphSAINT \citep{zeng2019graphsaint} and NeighborSampling \citep{hamilton2017inductive}. We tuned hyperparameters according to validation score and early stopping. 
The final hyperparameters are detailed in \Cref{tab:training-details}.
\begin{table*}[h!]
    \centering
    \small
    \begin{tabular}{lcclc}
    \toprule
    Dataset             & \# layers & Hidden size & Learning rate & Sampling method \\ 
    \midrule
    \textbf{ogbn-arxiv} & 3 & 256 & 0.01 & GraphSAINT \\
    \textbf{ogbn-products} & 3 & 128 & 0.001 & NeighborSampling \\
    \textbf{ogbn-mag} & 2 & 256 & 0.01 & NeighborSampling \\
    \textbf{ogbn-proteins} & 6 & 64 & 0.01  & NeighborSampling \\
    \midrule
    \textbf{ogbl-collab} & 3 & 64 & 0.001 & Full Batch \\
    \textbf{ogbl-citation2} & 3 & 256 & 0.0005 &  NeighborSampling \\
    \bottomrule
    \end{tabular}
    \caption{Training details of node- and link-prediction datasets.}
 
    \label{tab:training-details}
\end{table*}

\subsection{Robustness to Noise}
In these experiments, we used the same best-found hyperparameters in node-prediction, 
with 8 attention heads in \textbf{ogbn-arxiv} and 1 head in \textbf{ogbn-mag}. 
Each point is an average of 10 runs.

\subsection{Synthetic Benchmark: \kchoose{}}
In all experiments, we used a learning rate decay of $0.5$, a hidden size of $d=128$, a batch size of $1024$, and the Adam optimizer. 

We created a separate dataset for every graph size ($k$), 
and we split each such dataset to train and test with a ratio of 80:20. 
Since this is a contrived problem, we did not use a validation set, and the reported test results can be thought of as validation results.
Every model was trained on a fixed value of $k$.
Every key node (bottom row in \Cref{fig:kchoose}) was encoded as a sum of learned attribute embedding and a value embedding, followed by ReLU.

We experimented with layer normalization, batch normalization, dropout, various activation functions and various learning rates. None of these changed the general trend, so the experiments in \Cref{fig:kchoose-results} were conducted without any normalization, without dropout and a learning rate of $0.001$.

\subsection{Programs: {\sc{VarMisuse}}}
We used the code, splits, and the same best-found configurations as \citet{brockschmidt2019graph}, who performed an extensive hyperparameter tuning by searching over 30 configurations for each GNN type. We trained each model five times.

We took the best-found hyperparameters of \citet{brockschmidt2019graph} for GAT and used them to train \newgat{}, without any further tuning.

\subsection{Graph-Prediction: QM9}
\label{se:qm9-details}
We used the code and splits of \citet{brockschmidt2019graph} who performed an extensive hyperparameter search over 500 configurations.
We took the best-found hyperparameters of \citet{brockschmidt2019graph} for GAT and used them to train \newgat{}.
The only minor change from GAT is placing a residual connection after every layer, rather than after every other layer,
which is within the experimented hyperparameter search that was reported by \citet{brockschmidt2019graph}.

\subsection{Compute and Resources}
\label{subsec:compute}
Our experiments consumed approximately 100 days of GPU in total. We used cloud GPUs of type V100, and we used RTX 3080 and 3090 in local GPU machines.

 \section{Data Statistics}
 \label{sec:stats}

 \subsection{Node- and Link-Prediction Datasets}
 Statistics of the OGB datasets we used for node- and link-prediction are shown in \Cref{tab:stat-ogb}.

 \begin{table*}[h!]
	\centering
	\begin{tabular}{lrrrr}
	  \toprule
	  	Dataset &  \# nodes & \# edges & Avg. node degree & Diameter  \\
	  \midrule
		\textbf{ogbn-arxiv}& 169,343 & 1,166,243 & 13.7 & 23 \\
		\textbf{ogbn-mag} & 1,939,743 & 21,111,007 & 21.7 & 6  \\
		\textbf{ogbn-products} & 2,449,029 & 61,859,140 & 50.5 & 27 \\
		\textbf{ogbn-proteins}  & 132,534  & 39,561,252 & 597.0 & 9 \\
	  \midrule
		\textbf{ogbl-collab}  & 235,868 & 1,285,465 & 8.2 & 22 \\
		\textbf{ogbl-citation2} & 2,927,963 & 30,561,187 & 20.7 & 21 \\
	  \bottomrule
	\end{tabular}
	\caption{ Statistics of the OGB datasets \cite{hu2020open}.}
	\label{tab:stat-ogb}
  \end{table*}

\subsection{QM9}
Statistics of the QM9 dataset, as used in \citet{brockschmidt2019graph} are shown in \Cref{tab:stat-qm}.
\begin{table*}[h!]
    \centering
       \begin{tabular}{lrrrr}
        \toprule
         &  Training & Validation & Test \\
         \midrule
         \# examples & 110,462 & 10,000 & 10,000 \\
         \# nodes - average & 18.03 & 18.06 & 18.09\\
         \# edges - average  & 18.65 & 18.67 & 18.72 \\
       Diameter - average & 6.35 & 6.35 & 6.35 \\
	    \bottomrule
    \end{tabular}
	\caption{Statistics of the QM9 chemical dataset \cite{ramakrishnan2014quantum} as used by \citet{brockschmidt2019graph}.}
    \label{tab:stat-qm}
\end{table*} 

\subsection{{\sc{VarMisuse}}}
Statistics of the {\sc{VarMisuse}} dataset, as used in \citet{allamanis2018learning} and \citet{brockschmidt2019graph}, are shown in \Cref{tab:stat-varmisuse}.
\begin{table*}[h!]
    \centering
          \begin{tabular}{lrrrr}
        \toprule
         &  Training & Validation & UnseenProject Test & SeenProject Test \\
         \midrule
         \# graphs & 254360 & 42654 & 117036 & 59974  \\
         \# nodes - average & 2377 & 1742 & 1959 & 3986 \\
         \# edges - average  & 7298 & 7851 & 5882 & 12925 \\
    	    Diameter - average & 7.88 & 7.88 & 7.78 & 7.82 \\
	    \bottomrule
    \end{tabular}
	\caption{Statistics of the {\sc{VarMisuse}} dataset \cite{allamanis2018learning} as used by \citet{brockschmidt2019graph}.}
    \label{tab:stat-varmisuse}
\end{table*} 

\section{Additional Results}
\label{sec:additional}

\subsection{\kchoose{}}
\label{sec:additional-kchoose}
\Cref{fig:kchoose-results-additional} shows additional comparison between \newgat{} and GIN \cite{xu2018powerful} in the \kchoose{} problem. \newgat{} easily achieves 100\% train and test accuracy even for $k$$=$$100$ and using only a single head. GIN, although considered as more expressive than other GNNs, cannot perfectly fit the training data (with a model size of $d=128$) starting from $k$$=$$20$.

\definecolor{ao}{rgb}{0.0, 0.5, 0.0}
\definecolor{mypurple}{HTML}{AB30C4}

\begin{figure*}
    \centering
	\begin{tikzpicture}[scale=1]
	\begin{axis}[
		xlabel={$k$ (number of different keys in each graph)},
		ylabel={\footnotesize{Accuracy}},
		ylabel near ticks,
        legend style={at={(1,0.35)},anchor=west,mark size=2pt,
        	font=\footnotesize, %
        	inner xsep=0pt, inner sep=1pt},
        legend cell align={left},
        xmin=5, xmax=72,
        ymin=-0.0, ymax=105,
        xtick={10,20,...,70},
        ytick={0,10,20,...,100},
        label style={font=\footnotesize},
        ylabel style={font=\footnotesize},
        ylabel shift={-5pt},        
        tick label style={font=\scriptsize} ,
        grid = major,
        major grid style={dotted,gray},
        width = 0.78\linewidth, height = 52.55mm
    ]

\addplot[color=ao, densely dashed, mark options={solid, fill=ao, draw=black}, mark=triangle*, line width=0.5pt, mark size=2pt, visualization depends on=\thisrow{alignment} \as \alignment, nodes near coords, point meta=explicit symbolic,
    every node near coord/.style={anchor=\alignment, font=\scriptsize}] 
table [meta index=2]  {
x   y       label   alignment
10	100		{}		-129
20	100		{}		-90
30	100		{}		-90
40	100		{}		-60
50	100		{}		180
60	100		{}		180
70	100		{}		180
	};
    \addlegendentry{\newgat{}$_{1h}$ train}

\addplot[color=ao, solid, mark options={solid, fill=ao, draw=black}, mark=triangle*, line width=0.5pt, mark size=2pt, visualization depends on=\thisrow{alignment} \as \alignment, nodes near coords, point meta=explicit symbolic,
    every node near coord/.style={anchor=\alignment, font=\scriptsize}] 
table [meta index=2]  {
x   y       label   alignment
10	100		{}		-129
20	100		{}		-90
30	100		{}		-90
40	100		{}		-60
50	100		{}		180
60	100		{}		180
70	100		{}		180
	};
    \addlegendentry{\newgat{}$_{1h}$ test}

\addplot[color=mypurple,densely dashed, mark options={solid, fill=mypurple, draw=black}, mark=*, line width=0.5pt, mark size=2pt, visualization depends on=\thisrow{alignment} \as \alignment, nodes near coords, point meta=explicit symbolic,
    every node near coord/.style={anchor=\alignment, font=\scriptsize}] 
table [meta index=2]  {
x   y       label   alignment
10	100		{}		-129
20	99		{}		-90
30	45		{}		-90
40	12		{}		-90
50	9		{}		-90
60	3		{}		-90
70	3		{}		-90
	};
    \addlegendentry{GIN train}
    
\addplot[color=mypurple, mark options={solid, fill=mypurple, draw=black}, mark=*, line width=0.5pt, mark size=2pt, visualization depends on=\thisrow{alignment} \as \alignment, nodes near coords, point meta=explicit symbolic,
    every node near coord/.style={anchor=\alignment, font=\scriptsize}] 
table [meta index=2]  {
x   y       label   alignment
10	91		{}		-129
20	63		{}		-90
30	35		{}		-90
40	10		{}		-90
50	8		{}		-90
60	3		{}		-90
70	3		{}		-90
	};
    \addlegendentry{GIN test}

	\end{axis}
\end{tikzpicture}
\caption{Train and test accuracy across graph sizes in the \kchoose{} problem. \newgat{} easily achieves 100\% train and test accuracy even for $k$$=$$100$ and using only a single head. GIN \cite{xu2018powerful}, although considered as more expressive than other GNNs, cannot perfectly fit the training data (with a model size of $d=128$) starting from $k$$=$$20$.}
\label{fig:kchoose-results-additional}
\end{figure*}
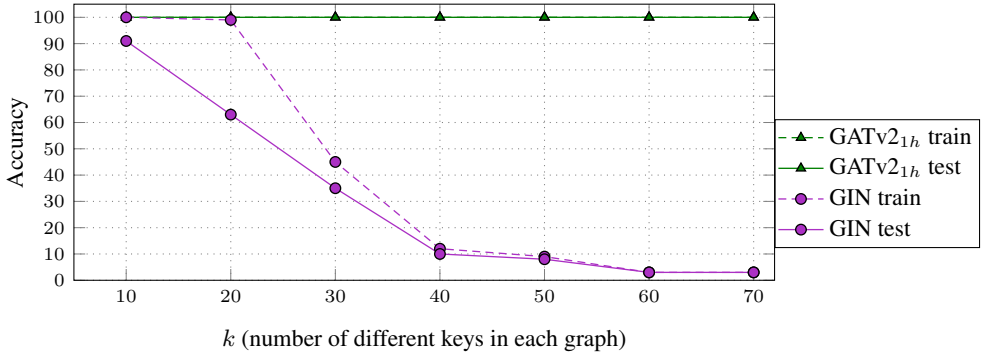

\subsection{QM9}
\label{sec:additional-qm}
Standard deviation for the QM9 results of \Cref{subsec:graph} are presented in \Cref{tab:qm-results-appendix}.

\renewcommand{\qmwidth}{12mm}

\begin{table*}[h!]
		\centering
		\footnotesize
		\begin{tabu}{lR{\qmwidth}R{\qmwidth}R{\qmwidth}R{\qmwidth}R{\qmwidth}R{\qmwidth}R{\qmwidth}}
			\toprule
			& \multicolumn{6}{c}{Predicted Property} & \\
			Model & \multicolumn{1}{c}{1} & \multicolumn{1}{c}{2} & \multicolumn{1}{c}{3} & \multicolumn{1}{c}{4} & \multicolumn{1}{c}{5} & \multicolumn{1}{c}{6} & \multicolumn{1}{c}{7} \\
			\midrule
			\normalsize{GCN$^\dagger$} & 3.21\err{0.06} & \textbf{\uline{4.22}}\err{0.45} & 1.45\err{0.01} & 1.62\err{0.04} & 2.42\err{0.14} & 16.38\err{0.49} & 17.40\err{3.56}\\
			\normalsize{GIN$^\dagger$} & \textbf{\uline{2.64}}\err{0.11} & 4.67\err{0.52} & 1.42\err{0.01} & 1.50\err{0.09} & \textbf{\uline{2.27}}\err{0.09} & \textbf{\uline{15.63}}\err{1.40} & \textbf{\uline{12.93}}\err{1.81}  \\
			\midrule
			\normalsize{GAT$_{1h}$} & 3.08\err{0.08} & 7.82\err{1.42} & 1.79\err{0.10} & 3.96\err{1.51} & 3.58\err{1.03} & 35.43\err{29.9} & 116.5\err{10.65} \\		
			\normalsize{GAT$_{8h}$$^\dagger$} & 2.68\err{0.06} & 4.65\err{0.44} & 1.48\err{0.03} & 1.53\err{0.07} & 2.31\err{0.06} & 52.39\err{42.58} & 14.87\err{2.88} \\
			\midrule
			\normalsize{\newgat{}$_{1h}$} & 3.04\err{0.06} & 6.38\err{0.62} & 1.68\err{0.04} & 2.18\err{0.61} & 2.82\err{0.25} & 20.56\err{0.70} & 77.13\err{37.93}  \\
			\normalsize{\newgat{}$_{8h}$} & \textbf{2.65}\err{0.05} & \textbf{4.28}\err{0.27} & \textbf{\uline{1.41}}\err{0.04} & \textbf{\uline{1.47}}\err{0.03} & \textbf{2.29}\err{0.15} & \textbf{16.37}\err{0.97} & \textbf{14.03}\err{1.39}  \\		
			\bottomrule
			\rule{0pt}{1ex}    
		\end{tabu}
			\begin{tabu}{lR{\qmwidth}R{\qmwidth}R{\qmwidth}R{\qmwidth}R{\qmwidth}R{\qmwidth}R{\qmwidth}}
			\toprule
			& \multicolumn{6}{c}{Predicted Property} & Rel. to \\
			Model & \multicolumn{1}{c}{8} & \multicolumn{1}{c}{9} & \multicolumn{1}{c}{10} & \multicolumn{1}{c}{11} & \multicolumn{1}{c}{12} & \multicolumn{1}{c}{13} &  GAT$_{8h}$ \\
			\midrule
			\normalsize{GCN$^\dagger$} & 7.82\err{0.80} & 8.24\err{1.25} & 9.05\err{1.21} & 7.00\err{1.51} & 3.93\err{0.48} & \textbf{\uline{1.02}}\err{0.05} & -1.5\% \\
			\normalsize{GIN$^\dagger$} & \textbf{\uline{5.88}}\err{1.01} & 18.71\err{23.36} & \textbf{\uline{5.62}}\err{0.81} & \textbf{\uline{5.38}}\err{0.75} & \textbf{\uline{3.53}}\err{0.37} & 1.05\err{0.11} & -2.3\% \\
			\midrule
			\normalsize{GAT$_{1h}$} & 28.10\err{16.45} & 20.80\err{13.40} & 15.80\err{5.87} & 10.80\err{2.18} & 5.37\err{0.26} & 3.11\err{0.14} & +134.1\% \\		
 		\normalsize{GAT$_{8h}$$^\dagger$} & 7.61\err{0.46} & 6.86\err{0.53} & 7.64\err{0.92} & 6.54\err{0.36} & 4.11\err{0.27} & \textbf{1.48}\err{0.87} & +0\% \\
			\midrule

			\normalsize{\newgat{}$_{1h}$} & 10.19\err{0.63} & 22.56\err{17.46} & 15.04\err{4.58} & 22.94\err{17.34} & 5.23\err{0.36} & 2.46\err{0.65} & +91.6\%\\
			\normalsize{\newgat{}$_{8h}$} & \textbf{6.07}\err{0.77}& \textbf{\uline{6.28}}\err{0.83} & \textbf{6.60}\err{0.79} & \textbf{5.97}\err{0.94} & \textbf{3.57}\err{0.36} & 1.59\err{0.96} & \textbf{\uline{-11.5}}\% \\		
			\bottomrule
		\end{tabu}
	\caption{Average error rates (lower is better), 5 runs $\pm$ standard deviation for each property, on the QM9 dataset. 
	The best result among GAT and \newgat{} is marked in \textbf{bold}; the globally best result among all GNNs is  marked in \textbf{\uline{bold and underline}}.	
	$\dagger$ was previously tuned and reported by \citet{brockschmidt2019graph}.}
	\label{tab:qm-results-appendix}
	\end{table*}

\subsection{Pubmed Citation Network}
\label{subsec:pubmed}
We tuned the following parameters for both GAT and \newgat{}: 
number of layers $\in \{0,1,2\}$, hidden size $\in \{8,16,32\}$, 
number of heads $\in \{1,4,8\}$, dropout $\in \{0.4, 0.6, 0.8\}$, 
bias $\in \{True, False\}$, share weights $\in \{True, False\}$, 
use residual $\in \{True, False\}$. 
\Cref{tab:pubmed1} shows the test accuracy (100 runs$\pm$stdev) using the best hyperparameters found for each model.

\begin{table*}[!h]
    \caption{Accuracy (100 runs$\pm$stdev) on Pubmed.
    \newgat{} is more accurate than GAT.}
    \centering
    \footnotesize
    \setlength\tabcolsep{5 pt}
        \begin{tabular}{@{}lc@{}}
        \toprule
         Model & Accuracy \\ 
         \midrule     
          GAT	   & 78.1\err{0.59}\\
          \newgat{}	   & \textbf{78.5}\err{0.38}\\
    \bottomrule
    \end{tabular}

    \label{tab:pubmed1}
\end{table*}

It is important to note that PubMed has only \textbf{60 training nodes}, which hinders expressive models such as \newgat{} from exploiting their approximation and generalization advantages. Still, \newgat{} is more accurate than GAT even in this small dataset. In \Cref{tab:pubmed}, we show that this difference is statistically significant (p-value $< 0.0001$).

\section{Additional Comparison with Transformer-style Attention (\dpgat{})}
\label{sec:dpgat}

The main goal of our paper is to highlight a severe theoretical limitation of the highly popular GAT architecture, and propose a minimal fix.

We perform additional empirical comparison to \dpgat{}, which follows \citet{luong15} and the dot-product attention of 
the Transformer 
\cite{vaswani2017attention}.
We define \dpgat{} as:
\begin{align}
	&			 \mathrm{\dpgat{}} \text{ \cite{vaswani2017attention}:} &
	e\left(\vh_i, \vh_j\right) =
		\left(\left(\vh_i^{\top}\mQ\right) \cdot \left(\vh_j^{\top}\mK \right)^{\top} \right)
		/ \sqrt{d_k} 
		\label{eq:dpgat}
	\end{align}
Variants of \dpgat{} were used in prior work \cite{gao2019graph,  dwivedi2020generalization,rong2020self,velivckovic2020pointer,kim2021how}, and we consider it here for the conceptual and empirical comparison with GAT.

Despite its popularity, \dpgat{} is \emph{strictly weaker} than \newgat{}. 
\dpgat{} provably performs dynamic attention for any set of node representations only if they are \emph{linearly independent} (see \Cref{theorem:dpgat} and its proof in \Cref{sec:dpgat-proof}).
Otherwise, there are examples of node representations that \emph{are} linearly dependent and mappings $\varphi$, 
for which dynamic attention does not hold (\Cref{subsec:weaker}).
This constraint is not harmful when violated in practice,
because every node has only a small set of neighbors, rather than all possible nodes in the graph; further, some nodes possibly never need to be ``selected'' in practice.

\subsection{Proof that \dpgat{} Performs Dynamic Attention for Linearly Independent Node Representations}
\label{sec:dpgat-proof}
\begin{theorem}
	A \dpgat{} layer computes dynamic attention
	for any set of node representations $\mathbb{K}=\mathbb{Q}=\set{\vh_1,...,\vh_n}$ that are linearly independent.
	\label{theorem:dpgat}
\end{theorem}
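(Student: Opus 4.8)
The plan is to exploit the \emph{bilinearity} of the \dpgat{} score in its two arguments: a single matrix controls all $n^2$ pairwise scores simultaneously. Writing $\mM = \mQ\mK^{\top}$, \Cref{eq:dpgat} becomes $e\left(\vh_i,\vh_j\right)=\vh_i^{\top}\mM\vh_j/\sqrt{d_k}$, and since $\sqrt{d_k}>0$ the division does not affect any ordering. So it suffices to show that for every mapping $\varphi:\intn\to\intn$ there is a matrix $\mM$ with
\begin{equation}
	\vh_i^{\top}\mM\vh_{\varphi\left(i\right)} \;>\; \vh_i^{\top}\mM\vh_j \qquad \text{for all } i\in\intn,\ j\neq\varphi\left(i\right);
\end{equation}
dynamic attention (\Cref{def:dynamic}) then follows from the increasing monotonicity of $\mathrm{softmax}$, exactly as in the closing step of the proof of \Cref{theorem:dynamic}.

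Next I would use the linear independence hypothesis to produce a dual system. Extending $\set{\vh_1,\dots,\vh_n}$ to a basis of $\mathbb{R}^{d}$ and taking the corresponding dual vectors yields $\vg_1,\dots,\vg_n\in\mathbb{R}^{d}$ with $\vg_i^{\top}\vh_j=\delta_{ij}$ for all $i,j\in\intn$. Define
\begin{equation}
	\mM \;=\; \sum_{k=1}^{n}\vg_k\,\vg_{\varphi\left(k\right)}^{\top}.
\end{equation}
Then for all $a,b\in\intn$,
\begin{equation}
	\vh_a^{\top}\mM\vh_b \;=\; \sum_{k=1}^{n}\left(\vh_a^{\top}\vg_k\right)\left(\vg_{\varphi\left(k\right)}^{\top}\vh_b\right) \;=\; \sum_{k=1}^{n}\delta_{ak}\,\delta_{\varphi\left(k\right)b},
\end{equation}
which equals $1$ when $b=\varphi\left(a\right)$ and $0$ otherwise. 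Thus in the $a$-th row of scores the value at key $\varphi\left(a\right)$ is $1$ and at every other key is $0$ — precisely the strict inequality required.

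It then remains to realize $\mM$ through the factorization $\mM=\mQ\mK^{\top}$: pick any rank factorization $\sqrt{d_k}\,\mM=\mQ\mK^{\top}$ with $\mQ,\mK\in\mathbb{R}^{d\times d_k}$, which exists because $\mathrm{rank}\,\mM\le n\le d$, and in particular whenever $d_k\ge n$ (e.g.\ the standard choice $d_k=d$). With these parameters $e_{\mQ,\mK}\left(\vh_i,\vh_{\varphi\left(i\right)}\right)=1/\sqrt{d_k}>0=e_{\mQ,\mK}\left(\vh_i,\vh_j\right)$ for every $j\neq\varphi\left(i\right)$, so $\alpha_{i,\varphi\left(i\right)}>\alpha_{i,j}$ by monotonicity of $\mathrm{softmax}$, matching \Cref{def:dynamic}.

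I expect the only real friction to be the dimension bookkeeping — confirming $\mM$ is expressible as $\mQ\mK^{\top}$ for the given $d_k$, and stating the hypothesis cleanly (linear independence forces $n\le d$). It is worth isolating \emph{where} independence enters: it is exactly what makes the linear map $\mM\mapsto\left(\vh_i^{\top}\mM\vh_j\right)_{i,j\in\intn}$ surjective onto $\mathbb{R}^{n\times n}$, so the matrix $\mM$ above is well defined and can hit any target pattern; when the $\vh_i$ are linearly dependent this surjectivity breaks down, which is the gap that the counterexample in \Cref{subsec:weaker} exploits. Compared with \Cref{theorem:dynamic}, the argument is actually cleaner: there one must detour through a continuous interpolant $\widetilde{g}$ and invoke universal approximation, whereas here the bilinear form lets us write down $\mM$ in closed form and obtain the exact scores $1$ and $0$.
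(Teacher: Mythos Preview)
Your proof is correct and hits the same target as the paper --- forcing the pairwise score matrix to equal the indicator matrix $\mP$ with $\mP_{i,j}=1$ iff $j=\varphi(i)$ --- but by a different construction. The paper stacks the $\vh_i$ as rows of $\mX$, takes the SVD $\mX=\mU\bm{\Sigma}\mV^{\top}$, and sets $\mQ=\mV\bm{\Sigma}^{-1}$ and $\mK=\mV\bm{\Sigma}^{-1}\mU^{\top}\mP^{\top}\mU$, so that $(\mX\mQ)(\mX\mK)^{\top}=\mU\mU^{\top}\mP=\mP$ exactly; this yields explicit closed-form parameters in one shot. You instead isolate the bilinear form $\mM=\mQ\mK^{\top}$, build $\mM$ directly from a dual system $\vg_i^{\top}\vh_j=\delta_{ij}$, and only afterwards invoke a rank factorization to recover $\mQ,\mK$. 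Your route is more conceptual and arguably more elementary (no SVD), and it makes transparent \emph{why} linear independence is the right hypothesis: it is exactly what makes $\mM\mapsto(\vh_i^{\top}\mM\vh_j)_{i,j}$ surjective onto $\mathbb{R}^{n\times n}$, which is precisely the property that fails in \Cref{subsec:weaker}. The paper's route, on the other hand, is fully constructive and avoids the separate factorization step. Both arguments need the same dimension caveat (your $d_k\ge n$ suffices since $\mathrm{rank}\,\mM\le n$; the paper's main argument takes $\mQ,\mK\in\mathbb{R}^{d\times d}$ and then zero-pads for $d_k>d$).
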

\begin{proof}
	Let $\mathcal{G}=\left(\mathcal{V},\mathcal{E}\right)$ be a graph modeled by a \dpgat{} layer, having linearly independent node representations $\set{\vh_1,..., \vh_n}$. Let $\varphi : \intn \rightarrow \intn$ be any node mapping $\intn \rightarrow \intn$.
	
	We denote the i$^{th}$ row of a matrix $\mM$ as $\mM_i$.
	
	We define a matrix $\mP$ as:
	\begin{equation}
		\mP_{i,j} = 
		\begin{cases}
			1 & j = \varphi(i) \\
			0 & \textrm{otherwise}
		\end{cases}
		\label{eq:P}
	\end{equation}

	Let $\mX\in\mathbb{R}^{n}\times \mathbb{R}^{d}$ be the matrix holding the graph's node representations as its rows:

	\begin{align}
		\mX &= \begin{bmatrix}
			\text{---} & \vh_1 & \text{---} \\
			\text{---} & \vh_2 & \text{---} \\
			& \vdots & \\
			\text{---} & \vh_n & \text{---}
			 \end{bmatrix}
			 \label{eq:X}
	\end{align}
	
	Since the rows of $\mX$ are linearly independent, it necessarily holds that $d \ge n$.

	Next, we find weight matrices $\mQ\in\mathbb{R}^{d}\times \mathbb{R}^{d}$ and $\mK \in \mathbb{R}^{d}\times \mathbb{R}^{d}$ such that:
	\begin{equation}
		(\mX\mQ)\cdot(\mX\mK)^{\top}=\mP
		\label{eq:QK=P}
	\end{equation}
	To satisfy \Cref{eq:QK=P}, we choose $\mQ$ and $\mK$ such that $\mX\mQ=\mU$ and $\mX\mK=\mP^{\top}\mU$ where $\mU$ is an orthonormal matrix ($\mU \cdot \mU^{\top} = \mU^{\top} \cdot \mU = I$).

We can obtain $\mU$ using the singular value decomposition (SVD) of $\mX$:
	\begin{equation}
		\mX = \mU\bm{\Sigma}\mV^{\top}
		\label{eq:X-SVD}
	\end{equation}
	Since $\bm{\Sigma}\in\mathbb{R}^{n}\times \mathbb{R}^{n}$ and $\mX$ has a full rank, $\bm{\Sigma}$ is invertible, and thus:
	\begin{equation}
		\mX\mV\bm{\Sigma}^{-1}= \mU
	\end{equation}
	Now, we define $\mQ$ as follows:
	\begin{equation}
		\mQ=\mV\bm{\Sigma}^{-1}
		\label{eq:Q}
	\end{equation}
	Note that $\mX\mQ=\mU$, as desired.

	To find $\mK$ that satisfies $\mX\mK=\mP^{\top}\mU$, we use \Cref{eq:X-SVD} and require:
	\begin{equation}
		\mU\bm{\Sigma}\mV^{\top}\mK= \mP^{\top}\mU
	\end{equation}
	and thus:
	\begin{equation}
		\mK= \mV\bm{\Sigma}^{-1}\mU^T\mP^{\top}\mU
		\label{eq:K}
	\end{equation}

	We define:
	\begin{equation}
		z\left(\vh_i,\vh_j\right) = e\left(\vh_i, \vh_j\right) \cdot \sqrt{d_k}
	\end{equation}
	Where $e$ is the attention score function of \dpgat{} (\Cref{eq:dpgat}).

	Now, for a query $i$ and a key $j$, and the corresponding representations $\vh_i,\vh_j$:
	\begin{align}
		z\left(\vh_i,\vh_j\right) & =
	 \left(\vh_i^{\top}\mQ\right) \cdot \left(\vh_j^{\top}\mK\right)^{\top} \\
	& = \left(\mX_i\mQ\right) \cdot \left(\mX_j\mK\right)^{\top}
	\end{align} 

	Since $\mX_i\mQ  = \left(\mX\mQ\right)_i$ and $\mX_j\mK  = \left(\mX\mK\right)_j$, we get
	\begin{equation}
		z\left(\vh_i,\vh_j\right) =  \left(\mX\mQ\right)_i \cdot \left(\left(\mX\mK\right)_j\right)^{\top}= \mP_{i,j}
	\end{equation}
	Therefore:
	\begin{equation}
		z\left(\vh_i,\vh_j\right) = 
		\begin{cases}
			1 & j = \varphi(i)\\
			0 & \textrm{otherwise}
		\end{cases}
	\end{equation}

	And thus:
	\begin{equation}
		e\left(\vh_i,\vh_j\right) = 
		\begin{cases}
			1 / \sqrt{d_k} & j = \varphi(i) \\
			0 & otherwise
		\end{cases}
	\end{equation}

	To conclude, 
	for every selected query $i$ and any key $j_{\ne \varphi\left(i\right)}$:
	\begin{equation}
		e\left(\vh_i, \vh_{\varphi(i)}\right) > e\left(\vh_i, \vh_{j}\right)
	\end{equation}
	and due to the increasing monotonicity of $\mathrm{softmax}$:
	\begin{equation}
		\alpha_{i,\varphi\left(i\right)}  > \alpha_{i,j} 
	\end{equation}
	Hence, a \dpgat{} layer computes dynamic attention.

	In the case that $d > n$, we apply SVD to the full-rank matrix $\mX\mX^{\top}\in \mathbb{R}^{n\times n}$, and follow the same steps to construct $\mQ$ and $\mK$.
	
	In the case that 
	$\mQ\in\mathbb{R}^{d}\times \mathbb{R}^{d_{k}}$ and $\mK \in \mathbb{R}^{d}\times \mathbb{R}^{d_{k}}$
	and $d_{k} > d$, we can use the same $\mQ$ and $\mK$ (\Cref{eq:Q,eq:K}) padded with zeros. 
	We define the $\mQ'\in\mathbb{R}^{d}\times \mathbb{R}^{d_{key}}$ and $\mK'\in\mathbb{R}^{d}\times \mathbb{R}^{d_{key}}$ as follows:
	\begin{align}
		\mQ'_{i,j} & = 
		\begin{cases}
			\mQ_{i,j} & j \le d\\
			0 & \textrm{otherwise}
		\end{cases} \\ 
		\mK'_{i,j} & = 
		\begin{cases}
			\mK_{i,j} & j \le d\\
			0 & \textrm{otherwise}
		\end{cases}
	\end{align}
\end{proof}

\subsection{\dpgat{} is strictly weaker than \newgat{}}
\label{subsec:weaker}
There are examples of node representations that are linearly dependent and mappings $\varphi$, for which dynamic attention does not hold. First, we show a simple 2-dimensional example, and then we show the general case of such examples.

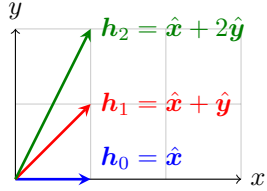
\begin{figure*}[!h]
\centering
\begin{tikzpicture}
\definecolor{ao}{rgb}{0.0, 0.5, 0.0}
\definecolor{mypurple}{HTML}{AB30C4}
  \draw[thin,gray!40] (0,0) grid (3,2);
  \draw[->] (0,0)--(3,0) node[right]{$x$};
  \draw[->] (0,0)--(0,2) node[above]{$y$};
  \draw[line width=1pt,blue,-stealth](0,0)--(1,0) node[anchor=south west]{$\vh_0=\hat{\vx}$};
  \draw[line width=1pt,red,-stealth](0,0)--(1,1) node[anchor=west]{$\vh_1=\hat{\vx}+\hat{\vy}$};
  \draw[line width=1pt,ao,-stealth](0,0)--(1,2) node[anchor=west]{$\vh_2=\hat{\vx}+2\hat{\vy}$};
\end{tikzpicture}
\caption{An example for node representations that are linearly dependent, for which \dpgat{} cannot compute dynamic attention, because no query vector $\vq\in \mathbb{R}^2$ can ``select'' $\vh_1$.}
\label{fig:dpgat-counterexample}
\end{figure*}

Consider the following linearly dependent set of vectors $\mathbb{K}=\mathbb{Q}$ (\Cref{fig:dpgat-counterexample}):
\begin{align}
	\vh_0 & = \hat{\vx} \\
	\vh_1 & = \hat{\vx} + \hat{\vy} \\
	\vh_2 & = \hat{\vx} + 2\hat{\vy}
\end{align}
where $\hat{\vx}$ and $\hat{\vy}$ are the cartesian unit vectors.
We define $\beta \in \set{0,1,2}$ 
to express $\set{\vh_0,\vh_1,\vh_2}$ using the same expression:
\begin{equation}
	\vh_\beta = \hat{\vx} + \beta\hat{\vy}
\end{equation}
Let $\vq \in \mathbb{Q}$ be any query vector. For brevity, we define the unscaled dot-product attention score as $s$:
\begin{equation}
	s\left(\vq,\vh_\beta\right) =
	e\left(\vq, \vh_\beta\right) \cdot \sqrt{d_k}
\end{equation}
Where $e$ is the attention score function of \dpgat{} (\Cref{eq:dpgat}).
The (unscaled) attention score between $\vq$ and $\set{\vh_0,\vh_1,\vh_2}$ is:
\begin{align}
	s\left(\vq,\vh_\beta\right) & = \left(\vq^{\top}\mQ\right) \left(\vh_\beta^{\top}\mK\right)^{\top} \\
	& = \left(\vq^{\top}\mQ\right) \left(\left(\hat{\vx} + \beta\hat{\vy}\right)^{\top}\mK\right)^{\top}  \\
	& = \left(\vq^{\top}\mQ\right) \left(\hat{\vx}^{\top}\mK + \beta\hat{\vy}^{\top}\mK\right)^{\top}  \\
	& = \left(\vq^{\top}\mQ\right) \left(\hat{\vx}^{\top}\mK\right)^{\top} + \beta\left(\vq^{\top}\mQ\right)\left(\hat{\vy}^{\top}\mK\right)^{\top}
\end{align}

The first term $\left(\vq^{\top}\mQ\right) \left(\hat{\vx}^{\top}\mK\right)^{\top}$ is unconditioned on $\beta$, and thus shared for every $\vh_\beta$.
Let us focus on the second term $\beta\left(\vq^{\top}\mQ\right)\left(\hat{\vy}^{\top}\mK\right)^{\top}$. If $\left(\vq^{\top}\mQ\right)\left(\hat{\vy}^{\top}\mK\right)^{\top} > 0$, then:
\begin{equation}
	e\left(\vq, \vh_{2}\right) > e\left(\vq, \vh_{1}\right)
\end{equation}

Otherwise, if $\left(\vq^{\top}\mQ\right)\left(\hat{\vy}^{\top}\mK\right)^{\top} \le 0$:
\begin{equation}
	e\left(\vq, \vh_{0}\right) \ge e\left(\vq, \vh_{1}\right)
\end{equation}
Thus, for any query $\vq$, the key $\vh_{1}$ can never get the highest score, and thus cannot be ``selected''. That is, the key $\vh_{1}$ cannot satisfy that $e\left(\vq,\vh_{1}\right)$ is strictly greater than any other key.

In the general case, let $\vh_0,\vh_1\in\mathbb{R}^{d}$ be some non-zero vectors , and $\lambda$ is some scalar such that $0 < \lambda < 1$.

Consider the following linearly dependent set of vectors:
\begin{equation}
	\mathbb{K}=\mathbb{Q}=\set{\beta \vh_1 + \left(1-\beta\right) \vh_0 \mid \beta \in \set{0,\lambda,1}}
\end{equation}

For any query $\vq \in \mathbb{Q}$ and $\beta \in \set{0,\lambda,1}$ we define:
\begin{equation}
	s\left(\vq,\beta\right) =
	e\left(\vq, \left(\beta \vh_1 + \left(1-\beta\right) \vh_0\right)\right) \cdot \sqrt{d_k}
\end{equation}
Where $e$ is the attention score function of \dpgat{} (\Cref{eq:dpgat}).

Therefore:
\begin{align}
	s\left(\vq,\beta\right) & = \left(\vq^{\top}\mQ\right)  \left(\left(\beta  \vh_1 + \left(1-\beta\right) \vh_0\right)^{\top}\mK\right)^{\top} \\
	& = \left(\vq^{\top}\mQ\right)  \left(\beta \vh_1^{\top}\mK + \left(1-\beta\right) \vh_0^{\top}\mK\right)^{\top} \\
	& = \left(\vq^{\top}\mQ\right)  \left(\beta \vh_1^{\top}\mK + \vh_0^{\top}\mK -\beta \vh_0^{\top}\mK\right)^{\top} \\
	& = \left(\vq^{\top}\mQ\right)  \left(\beta \left(\vh_1^{\top}\mK -\vh_0^{\top}\mK\right) + \vh_0^{\top}\mK\right)^{\top} \\
	& = \beta \left(\vq^{\top}\mQ\right) \left(\vh_1^{\top}\mK -\vh_0^{\top}\mK\right)^{\top} + \left(\vq^{\top}\mQ\right) \left(\vh_0^{\top}\mK \right)^{\top}
\end{align}

If $\left(\vq^{\top}\mQ\right) \left(\vh_1^{\top}\mK -\vh_0^{\top}\mK\right)^{\top} > 0$:
\begin{equation}
	e\left(\vq, \vh_{1}\right) > e\left(\vq, \vh_{\lambda}\right)
\end{equation}

Otherwise, if $\left(\vq^{\top}\mQ\right) \left(\vh_1^{\top}\mK -\vh_0^{\top}\mK\right)^{\top} \le 0$:
\begin{equation}
	e\left(\vq, \vh_{0}\right) \ge e\left(\vq, \vh_{\lambda}\right)
\end{equation}
Thus, for any query $\vq$, the key $\vh_{\lambda}$ cannot be selected. That is, the key $\vh_{\lambda}$ cannot satisfy that $e\left(\vq,\vh_{\lambda}\right)$ is strictly greater than any other key. Therefore, there are mappings $\varphi$, 
for which dynamic attention does not hold. 

While we prove that \newgat{} computes dynamic attention (\Cref{sec:proof-dynamic}) for \emph{any} set of node representations $\mathbb{K}=\mathbb{Q}$, there are sets of node representations 
and mappings $\varphi$ for which
dynamic attention does not hold for \dpgat{}. Thus, \dpgat{} is strictly weaker than \newgat{}.

\subsection{Empirical Evaluation}
Here we repeat the experiments of \Cref{sec:eval} with \dpgat{}. We remind that \dpgat{} is \emph{strictly weaker} than our proposed \newgat{} (see a proof in \Cref{sec:dpgat-proof}).

\input{noise_fig_with_dpgat.tex}

\begin{table*}[h!] %
    \caption{Accuracy (5 runs$\pm$stdev) on {\sc{VarMisuse}}. 
    \newgat{} is more accurate than all GNNs in both test sets, using GAT's hyperparameters. %
    ${\dagger}$ -- previously reported by \citet{brockschmidt2019graph}.
    }%
    \centering
    \footnotesize
    \setlength\tabcolsep{5 pt}
        \begin{tabular}{@{}llcc@{}}
        \toprule
         & Model & \multicolumn{1}{c}{SeenProj} & \multicolumn{1}{c}{UnseenProj} \\
         \midrule
		 \multirow{2}{*}{\shortstack[c]{No-\\Attention}}& GCN$^{\dagger}$ & 87.2\err{1.5}&  81.4\err{2.3}\\
		 & GIN$^{\dagger}$ & 87.1\err{0.1}&  81.1\err{0.9}\\
         \midrule
          \multirow{3}{*}{Attention}         
          & GAT$^{\dagger}$	   & 86.9\err{0.7}& 81.2\err{0.9}\\
          & \dpgat{} & \textbf{88.0}\err{0.8} & 81.5\err{1.2}\\
          & \newgat{}	   & \textbf{88.0}\err{1.1}& \textbf{82.8}\err{1.7}\\
\bottomrule
    \end{tabular}

    \label{tab:varmisuse-results-dpgat}
\end{table*}

\begin{table*}[h!]
    \centering
	\caption{Average accuracy (\Cref{tab:node-ogbn-results-dp}) and ROC-AUC (\Cref{tab:proteins-results-dp}) in node-prediction datasets (10 runs$\pm$std). In all  datasets, \newgat{} outperforms GAT.
	${\dagger}$ -- previously reported by \citet{hu2020open}.
	} 
	\begin{subtable}{0.8\textwidth}
    		\caption{}
	\centering
    \footnotesize
    \begin{tabu}{llccc}
        \toprule
        Model & Attn. Heads     & \textbf{ogbn-arxiv}                           & \textbf{ogbn-products}                & \textbf{ogbn-mag}
        \\ 
        \midrule
        GCN$^{\dagger}$ & 0 & 71.74\err{0.29} & 78.97\err{0.33} & 30.43\err{0.25} \\
        GraphSAGE$^{\dagger}$ & 0 & 71.49\err{0.27} & 78.70\err{0.36} & 31.53\err{0.15} \\
        \midrule
        \multirow{2}{*}{\shortstack[c]{GAT}} 
        & 1      & 71.59\err{0.38}                   & 79.04\err{1.54}          & 32.20\err{1.46}  \\ 
        & 8     & 71.54\err{0.30}                   & 77.23\err{2.37}          & 31.75\err{1.60}  \\ 
        \midrule
        \multirow{2}{*}{\shortstack[c]{\dpgat{}}}     & 1    & 71.52\err{0.17}                   & 76.49\err{0.78}          & \textbf{32.77}\err{0.80} \\  
		     & 8    & 71.48\err{0.26}                   & 73.53\err{0.47}          & 27.74\err{9.97} \\  
        \midrule
        \multirow{2}{*}{\shortstack[c]{\newgat{} (this work)}} 

        & 1  & 71.78\err{0.18}          & \textbf{80.63}\err{0.70} & \textbf{32.61}\err{0.44} \\ 
        & 8   & \textbf{71.87}\err{0.25}          & 78.46\err{2.45}           & 32.52\err{0.39}  \\ 
        \bottomrule
        \end{tabu}
        \label{tab:node-ogbn-results-dp}
	\end{subtable}
	\begin{subtable}{0.18\textwidth}
		\caption{}
    \centering
    \footnotesize
    \begin{tabu}{c}
        \toprule
		\textbf{ogbn-proteins}\\ 
        \midrule
        72.51\vphantom{$^{\dagger}$}\err{0.35}  \\ %
        77.68\vphantom{$^{\dagger}$}\err{0.20} \\ %
        \midrule
		70.77\err{5.79}   \\ 
        78.63\err{1.62} \\ 
        \midrule
        63.47\err{2.79} \\  
        72.88\err{0.59} \\
        \midrule

        77.23\err{3.32} \\ 
        \textbf{79.52}\err{0.55}      \\
        \bottomrule
        \end{tabu}
    
        \label{tab:proteins-results-dp}
	\end{subtable}

	\label{tab:node-ogb-dp}
\end{table*}

\renewcommand{\qmwidth}{12mm}

\begin{table*}[h!]
	\caption{Average error rates (lower is better), 5 runs $\pm$ standard deviation for each property, on the QM9 dataset. 
	The best result among GAT, \newgat{} and \dpgat{} is marked in \textbf{bold}; the globally best result among all GNNs is  marked in \textbf{\uline{bold and underline}}.	
	$\dagger$ was previously tuned and reported by \citet{brockschmidt2019graph}.}
		\centering
		\footnotesize
		\begin{tabu}{lR{\qmwidth}R{\qmwidth}R{\qmwidth}R{\qmwidth}R{\qmwidth}R{\qmwidth}R{\qmwidth}}
			\toprule
			& \multicolumn{6}{c}{Predicted Property} & \\
			Model & \multicolumn{1}{c}{1} & \multicolumn{1}{c}{2} & \multicolumn{1}{c}{3} & \multicolumn{1}{c}{4} & \multicolumn{1}{c}{5} & \multicolumn{1}{c}{6} & \multicolumn{1}{c}{7} \\
			\midrule
			\normalsize{GCN$^\dagger$} & 3.21\err{0.06} & \textbf{\uline{4.22}}\err{0.45} & 1.45\err{0.01} & 1.62\err{0.04} & 2.42\err{0.14} & 16.38\err{0.49} & 17.40\err{3.56}\\
			\normalsize{GIN$^\dagger$} & 2.64\err{0.11} & 4.67\err{0.52} & 1.42\err{0.01} & 1.50\err{0.09} & 2.27\err{0.09} & \textbf{\uline{15.63}}\err{1.40} & 12.93\err{1.81}  \\
			\midrule
			\normalsize{GAT$_{1h}$} & 3.08\err{0.08} & 7.82\err{1.42} & 1.79\err{0.10} & 3.96\err{1.51} & 3.58\err{1.03} & 35.43\err{29.9} & 116.5\err{10.65} \\		
			\normalsize{GAT$_{8h}$$^\dagger$} & 2.68\err{0.06} & 4.65\err{0.44} & 1.48\err{0.03} & 1.53\err{0.07} & 2.31\err{0.06} & 52.39\err{42.58} & 14.87\err{2.88} \\	
			\midrule
			\normalsize{\dpgat{}$_{8h}$} & \textbf{\uline{2.63}}\err{0.09} & 4.37\err{0.13} & 1.44\err{0.07} & \textbf{\uline{1.40}}\err{0.03} & \textbf{\uline{2.10}}\err{0.07} & 32.59\err{34.77} & \textbf{\uline{11.66}}\err{1.00} \\		
			\normalsize{\dpgat{}$_{1h}$} & 3.20\err{0.17} & 8.35\err{0.78} & 1.71\err{0.03} & 2.17\err{0.14} & 2.88\err{0.12} & 25.21\err{2.86} & 65.79\err{39.84}  \\		
			\midrule
			\normalsize{\newgat{}$_{1h}$} & 3.04\err{0.06} & 6.38\err{0.62} & 1.68\err{0.04} & 2.18\err{0.61} & 2.82\err{0.25} & 20.56\err{0.70} & 77.13\err{37.93}  \\
			\normalsize{\newgat{}$_{8h}$} & 2.65\err{0.05} & \textbf{4.28}\err{0.27} & \textbf{\uline{1.41}}\err{0.04} & 1.47\err{0.03} & 2.29\err{0.15} & 16.37\err{0.97} & 14.03\err{1.39}  \\		
			\bottomrule
			\rule{0pt}{1ex}    
		\end{tabu}
			\begin{tabu}{lR{\qmwidth}R{\qmwidth}R{\qmwidth}R{\qmwidth}R{\qmwidth}R{\qmwidth}R{\qmwidth}}
			\toprule
			& \multicolumn{6}{c}{Predicted Property} & Rel. to \\
			Model & \multicolumn{1}{c}{8} & \multicolumn{1}{c}{9} & \multicolumn{1}{c}{10} & \multicolumn{1}{c}{11} & \multicolumn{1}{c}{12} & \multicolumn{1}{c}{13} &  GAT$_{8h}$ \\
			\midrule
			\normalsize{GCN$^\dagger$} & 7.82\err{0.80} & 8.24\err{1.25} & 9.05\err{1.21} & 7.00\err{1.51} & 3.93\err{0.48} & \textbf{\uline{1.02}}\err{0.05} & -1.5\% \\
			\normalsize{GIN$^\dagger$} & \textbf{\uline{5.88}}\err{1.01} & 18.71\err{23.36} & \textbf{\uline{5.62}}\err{0.81} & \textbf{\uline{5.38}}\err{0.75} & \textbf{\uline{3.53}}\err{0.37} & 1.05\err{0.11} & -2.3\% \\
			\midrule
			\normalsize{GAT$_{1h}$} & 28.10\err{16.45} & 20.80\err{13.40} & 15.80\err{5.87} & 10.80\err{2.18} & 5.37\err{0.26} & 3.11\err{0.14} & +134.1\% \\		
			\normalsize{GAT$_{8h}$$^\dagger$} & 7.61\err{0.46} & 6.86\err{0.53} & 7.64\err{0.92} & 6.54\err{0.36} & 4.11\err{0.27} & 1.48\err{0.87} & +0\% \\
			\midrule
			\normalsize{\dpgat{}$_{1h}$} & 12.93\err{1.70} & 13.32\err{2.39} & 14.42\err{1.95} & 13.83\err{2.55} & 6.37\err{0.28} & 3.28\err{1.16} & +77.9\%\\			
			\normalsize{\dpgat{}$_{8h}$} & 6.95\err{0.32} & 7.09\err{0.59} & 7.30\err{0.66} & 6.52\err{0.61} & 3.76\err{0.21} & \textbf{1.18}\err{0.33} & -9.7\%\\
			\midrule
			\normalsize{\newgat{}$_{1h}$} & 10.19\err{0.63} & 22.56\err{17.46} & 15.04\err{4.58} & 22.94\err{17.34} & 5.23\err{0.36} & 2.46\err{0.65} & +91.6\%\\			
			\normalsize{\newgat{}$_{8h}$} & \textbf{6.07}\err{0.77}& \textbf{\uline{6.28}}\err{0.83} & \textbf{6.60}\err{0.79} & \textbf{5.97}\err{0.94} & \textbf{3.57}\err{0.36} & 1.59\err{0.96} & \textbf{\uline{-11.5}}\% \\		
			\bottomrule
		\end{tabu}

	\label{tab:qm-results-appendix-with-dp}
	\end{table*}

\section{Statistical Significance}
\label{subsec:significance}
Here we report the statistical significance of the strongest \newgat{} and GAT models of the
experiments reported in \Cref{sec:eval}.

\begin{figure*}[h!]
    \centering
    \begin{subfigure}[b]{.54\textwidth}
		\centering
        \begin{tikzpicture}[trim axis left,trim axis right, scale=\noisefigsacle]

            \definecolor{color0}{rgb}{0.917647058823529,0.917647058823529,0.949019607843137}
            \definecolor{color1}{rgb}{0.282352941176471,0.470588235294118,0.815686274509804}
            \definecolor{color2}{rgb}{0.933333333333333,0.52156862745098,0.290196078431373}
            \definecolor{color3}{rgb}{0.415686274509804,0.8,0.392156862745098}
            
            \begin{axis}[
            axis line style={black},
            legend cell align={left},
            legend style={
			},
            legend pos=north east,
            legend entries={{\newgat{} (p-value)},{GAT}},
            tick align=inside,
            tick pos=both,
            grid style={dotted, gray},
		    xlabel={noise ratio},
            xmajorgrids,
            xmin=-0.0, xmax=0.5,
            xtick style={color=white!15!black},
            ylabel={Accuracy},
            ymajorgrids,
            ymin=65.1810334331438, ymax=72.2220649577313,
            ytick style={color=white!15!black},
            ylabel near ticks,
			height=\noisefigheight,
            ]

            \path [draw=color3, fill=color3, opacity=0.2]
            (axis cs:0,71.90201807025)
            --(axis cs:0,71.655834636175)
            --(axis cs:0.05,70.514165497)
            --(axis cs:0.1,69.80446233675)
            --(axis cs:0.15,69.1386837789)
            --(axis cs:0.2,68.738708914725)
            --(axis cs:0.25,68.2689663314)
            --(axis cs:0.3,68.266126652025)
            --(axis cs:0.35,67.809358043175)
            --(axis cs:0.4,67.28881729125)
            --(axis cs:0.45,67.025620174775)
            --(axis cs:0.5,66.3337175726)
            --(axis cs:0.5,66.73437362935)
            --(axis cs:0.5,66.73437362935)
            --(axis cs:0.45,67.506019459275)
            --(axis cs:0.4,67.687276551675)
            --(axis cs:0.35,68.1653440689)
            --(axis cs:0.3,68.72463998525)
            --(axis cs:0.25,68.988544999675)
            --(axis cs:0.2,69.20296705065)
            --(axis cs:0.15,69.482360612)
            --(axis cs:0.1,70.2378646482)
            --(axis cs:0.05,70.8456386014)
            --(axis cs:0,71.90201807025)
            --cycle;
        
            \path [draw=color2, fill=color2, opacity=0.2]
            (axis cs:0,71.690460548925)
            --(axis cs:0,71.336918506325)
            --(axis cs:0.05,70.247515105)
            --(axis cs:0.1,69.177359618825)
            --(axis cs:0.15,68.323553693375)
            --(axis cs:0.2,67.8844814293)
            --(axis cs:0.25,67.31310886385)
            --(axis cs:0.3,67.110750140475)
            --(axis cs:0.35,66.46680670075)
            --(axis cs:0.4,66.4188961789)
            --(axis cs:0.45,65.9022476761)
            --(axis cs:0.5,65.501080320625)
            --(axis cs:0.5,65.72084407885)
            --(axis cs:0.5,65.72084407885)
            --(axis cs:0.45,66.362788716425)
            --(axis cs:0.4,66.762989407975)
            --(axis cs:0.35,66.918991967375)
            --(axis cs:0.3,67.459853879)
            --(axis cs:0.25,67.701022451975)
            --(axis cs:0.2,68.230992887375)
            --(axis cs:0.15,68.694935606)
            --(axis cs:0.1,69.55477395875)
            --(axis cs:0.05,70.615602721025)
            --(axis cs:0,71.690460548925)
            --cycle;

            \addplot[semithick, color3, mark options={draw=black}, mark=triangle*, 
            visualization depends on=\thisrow{alignment} \as \alignment, 
            nodes near coords, point meta=explicit symbolic,
            every node near coord/.style={anchor=\alignment, font=\scriptsize, color=black, rotate=-45, yshift=0, xshift=0}] 
            table [meta index=2]  {
            x   y   label   alignment
            0  71.785281372    {}  90
            0.05   70.68308487   {} 90
            0.1  70.024689484  0.0002 -10
            0.15    69.324938203  <0.0001 180
            0.2 68.957263945 <0.0001 -10
            0.25 68.624158478 0.0001 180
            0.3 68.514289092 <0.0001 0
            0.35 67.972553253 <0.0001 180
            0.4 67.466616057 p-value<0.0001 -10
            0.45 67.266835785 <0.0001 -10
            0.5 66.544245148 {} 90
            };

            \addplot [semithick, color2, mark=*, mark options={draw=black}]
            table {%
            0 71.535913086
            0.05 70.439889525
            0.1 69.360738372
            0.15 68.494125365
            0.2 68.066168212
            0.25 67.512911225
            0.3 67.283707428
            0.35 66.718926241
            0.4 66.587866975
            0.45 66.156616211
            0.5 65.601711273
            };
            \end{axis}
        \end{tikzpicture}
        \caption{\textbf{ogbn-arxiv}}
        \label{fig:noise-arxiv-pvalue}
    \end{subfigure}
	\begin{subfigure}[b]{.45\textwidth}
        \centering
\begin{tikzpicture}[trim axis left,trim axis right, scale=\noisefigsacle]

    \definecolor{color0}{rgb}{0.917647058823529,0.917647058823529,0.949019607843137}
    \definecolor{color1}{rgb}{0.282352941176471,0.470588235294118,0.815686274509804}
    \definecolor{color3}{rgb}{0.933333333333333,0.52156862745098,0.290196078431373}
    \definecolor{color2}{rgb}{0.415686274509804,0.8,0.392156862745098}
    
    \begin{axis}[
    axis line style={black},
    legend cell align={left},
    legend style={
      at={(0.03,0.03)},
      anchor=south west,
    },
    legend pos=north east,
    legend entries={{\newgat{} (p-value)},{GAT}},
    tick align=inside,
    tick pos=both,
    grid style={dotted, gray},
    xlabel={noise ratio},
    xmajorgrids,
    xmin=-0.0, xmax=0.5,
    xtick style={color=white!15!black},
    ylabel={Accuracy},
    ymajorgrids,
    ymin=26.973718350705, ymax=33.472489923295,
    ytick style={color=white!15!black},
    ylabel near ticks,
	height=\noisefigheight,
    ]

    \path [draw=color2, fill=color2, opacity=0.2]
    (axis cs:0,32.863074599925)
    --(axis cs:0,32.3462407976)
    --(axis cs:0.05,30.986646844175)
    --(axis cs:0.1,30.690734253225)
    --(axis cs:0.15,30.44085464415)
    --(axis cs:0.2,30.241738361)
    --(axis cs:0.25,29.995148174775)
    --(axis cs:0.3,30.5407432914)
    --(axis cs:0.35,30.8314095492)
    --(axis cs:0.4,30.363533839575)
    --(axis cs:0.45,30.20346797815)
    --(axis cs:0.5,30.0006195131)
    --(axis cs:0.5,30.897035367925)
    --(axis cs:0.5,30.897035367925)
    --(axis cs:0.45,31.08495694525)
    --(axis cs:0.4,31.134814459325)
    --(axis cs:0.35,31.187778891325)
    --(axis cs:0.3,31.265170865775)
    --(axis cs:0.25,30.93118051835)
    --(axis cs:0.2,30.770959538125)
    --(axis cs:0.15,30.9003912823)
    --(axis cs:0.1,31.1338788615)
    --(axis cs:0.05,31.531665142625)
    --(axis cs:0,32.863074599925)
    --cycle;
    
    \path [draw=color3, fill=color3, opacity=0.2]
    (axis cs:0,32.967684883475)
    --(axis cs:0,31.295423451175)
    --(axis cs:0.05,30.106856997925)
    --(axis cs:0.1,29.2344302389)
    --(axis cs:0.15,28.6856863089)
    --(axis cs:0.2,28.39808987305)
    --(axis cs:0.25,28.195182351725)
    --(axis cs:0.3,28.22891662305)
    --(axis cs:0.35,27.829633320925)
    --(axis cs:0.4,27.289688334275)
    --(axis cs:0.45,27.6331881)
    --(axis cs:0.5,28.032070717125)
    --(axis cs:0.5,28.6780799888)
    --(axis cs:0.5,28.6780799888)
    --(axis cs:0.45,28.48289803185)
    --(axis cs:0.4,28.16217566325)
    --(axis cs:0.35,28.571519478375)
    --(axis cs:0.3,29.118154259525)
    --(axis cs:0.25,29.0393842354)
    --(axis cs:0.2,28.928038208025)
    --(axis cs:0.15,29.747818347025)
    --(axis cs:0.1,30.254184573725)
    --(axis cs:0.05,31.30647454215)
    --(axis cs:0,32.967684883475)
    --cycle;

    \addplot[semithick, color2, mark options={draw=black}, mark=triangle*, 
    visualization depends on=\thisrow{alignment} \as \alignment, 
    nodes near coords, point meta=explicit symbolic,
    every node near coord/.style={anchor=\alignment, font=\scriptsize, color=black, rotate=-45, yshift=0, xshift=0}] 
    table [meta index=2]  {
    x   y   label   alignment
    0   32.612126924    {}  90
    0.05    31.252056694   {} 90
    0.1 30.908223534  0.0006 -10
    0.15    30.674074174  0.0001 180
    0.2 30.498343276 p-value<0.0001 0
    0.25 30.436348342 <0.0001 180
    0.3 30.92515297 <0.0001 180
    0.35 31.006700324 <0.0001 180
    0.4 30.787095261 <0.0001 0
    0.45 30.648323062 <0.0001 160
    0.5 30.408688924 {} 90
    };

    \addplot [semithick, color3, mark=*, mark options={draw=black}]
    table {%
    0 32.200099946
    0.05 30.725577163
    0.1 29.749159623
    0.15 29.22029648
    0.2 28.658766176
    0.25 28.620138549
    0.3 28.604163742
    0.35 28.234816741
    0.4 27.778678323
    0.45 28.085552979
    0.5 28.346407889
    };
    \end{axis}
    
    \end{tikzpicture}
     
        \caption{\textbf{ogbn-mag}}
        \label{fig:noise-mag-pvalue}
    \end{subfigure}
    \caption{
    Test accuracy  and statistical significance compared to the noise ratio:
    \newgat{} is more robust to structural noise compared to GAT. 
    Each point is an average of 10 runs, error bars show standard deviation.}
    \label{fig:noise}
\end{figure*}
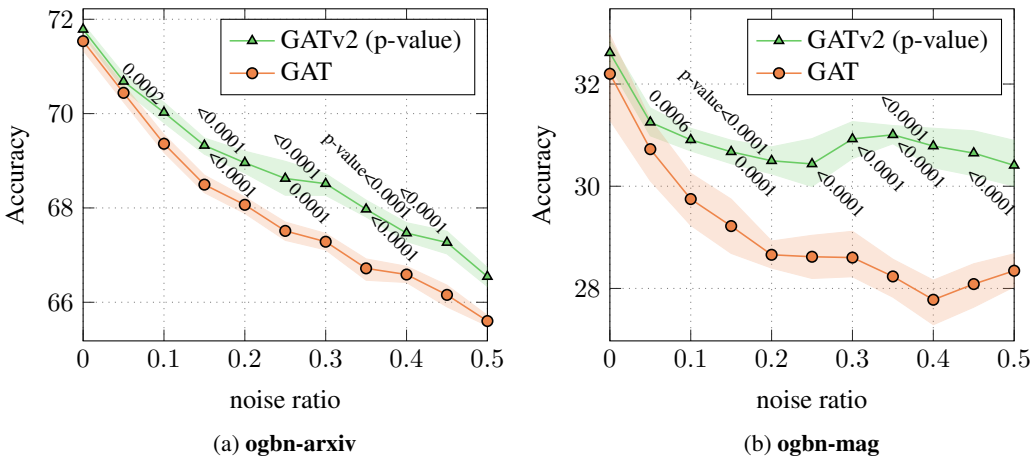

\begin{table*}[h!] 
    \caption{Accuracy (5 runs$\pm$stdev) on {\sc{VarMisuse}}. 
    \newgat{} is more accurate than all GNNs in both test sets, using GAT's hyperparameters. %
    ${\dagger}$ -- previously reported by \citet{brockschmidt2019graph}. 
    }
    \centering
    \footnotesize
    \setlength\tabcolsep{5 pt}
        \begin{tabular}{@{}lcc@{}}
        \toprule
         Model & \multicolumn{1}{c}{SeenProj} & \multicolumn{1}{c}{UnseenProj} \\
         \midrule    
          GAT$^{\dagger}$	   & 86.9\err{0.7}& 81.2\err{0.9}\\
          \newgat{}	   & \textbf{88.0}\err{1.1}& \textbf{82.8}\err{1.7}\\
          \midrule
          p-value & 0.048 & 0.049\\
         
\bottomrule
    \end{tabular}

    \label{tab:varmisuse-results-pvalue}
\end{table*}

\begin{table*}[!h]
        \caption{Accuracy (100 runs$\pm$stdev) on Pubmed.
        \newgat{} is more accurate than GAT.}
        \centering
        \footnotesize
        \setlength\tabcolsep{5 pt}
            \begin{tabular}{@{}lc@{}}
            \toprule
             Model & Accuracy \\ 
             \midrule     
              GAT	   & 78.1\err{0.59}\\
              \newgat{}	   & \textbf{78.5}\err{0.38}\\
              \midrule
              p-value & < 0.0001\\
        \bottomrule
        \end{tabular}
    
        \label{tab:pubmed}
    \end{table*}

\begin{table*}[h!]
    \centering
	\caption{Average accuracy (\Cref{tab:node-ogbn-results-pvalue}) and ROC-AUC (\Cref{tab:proteins-results-pvalue}) in node-prediction datasets (30 runs$\pm$std).
        We report on the best GAT / \newgat{} from \Cref{tab:node-ogb}.
	} 
	\begin{subtable}{0.8\textwidth}
    		\caption{}
	\centering
    \footnotesize
    \begin{tabu}{lccc}
        \toprule
        Model    & \textbf{ogbn-arxiv}                        & \textbf{ogbn-products}                & \textbf{ogbn-mag} 
        \\ 
        \midrule
        GAT     & 71.65\err{0.38}                   & 79.04\err{1.54}          & 32.36\err{1.10}  \\ 

        \newgat{}  & \textbf{71.93}\err{0.35}          & \textbf{80.63}\err{0.70} & \textbf{33.01}\err{0.41} \\ 
        \midrule
        p-value  & 0.0022 & <0.0001 &  0.0018\\
        \bottomrule
        \end{tabu}
        \label{tab:node-ogbn-results-pvalue}
	\end{subtable}
	\begin{subtable}{0.18\textwidth}
		\caption{}
    \centering
    \footnotesize
    \begin{tabu}{c}
        \toprule
\textbf{ogbn-proteins} \\
        \midrule
        78.29 \err{1.59} \\ 

        \textbf{78.96}\err{1.19}      \\
        \midrule
        0.0349  \\
        \bottomrule
        \end{tabu}
    
        \label{tab:proteins-results-pvalue}
	\end{subtable}

	\label{tab:node-ogb-pvalue}
\end{table*}

\begin{table*}[h!]
\caption{Average Hits@50 %
(\Cref{tab:collab-results-pvalue}) and mean reciprocal rank (MRR) (\Cref{tab:citation2-results-pvalue}) in link-prediction benchmarks from OGB (30 runs$\pm$std). 
We report on the best GAT / \newgat{} from \Cref{tab:link-results}.
}
\centering
	\begin{subtable}{0.6\textwidth}
    \caption{}
    \centering
    \footnotesize
    \begin{tabu}{lcc}
        \toprule
        \rowfont{\footnotesize}
            & \multicolumn{2}{c}{\textbf{ogbl-collab}}\\
       Model & \multicolumn{1}{c}{w/o val edges}& \multicolumn{1}{c}{w/ val edges} \\
        \midrule
        GAT & 42.24\err{2.26} & 46.02\err{4.09} \\
        \newgat{} & \textbf{43.82}\err{2.24} & \textbf{49.06}\err{2.50} \\
        \midrule
        p-value & 0.0043 & 0.0005\\
        \bottomrule
    \end{tabu}
    \label{tab:collab-results-pvalue}
	\end{subtable}
	\begin{subtable}{0.15\textwidth}
        \caption{}
    \centering
    \footnotesize
    \begin{tabular}{c}
        \toprule
        \textbf{ogbl-citation2}\\ 
        \\
        \midrule
        79.91\err{0.13} \\ 
        \textbf{80.20}\err{0.62} \\ 
        \midrule
        0.0075 \\
        \bottomrule
        \end{tabular}
         \label{tab:citation2-results-pvalue}
	\end{subtable}

\label{tab:link-results-dpgat-pvalue}
\end{table*}

	\begin{table*}[h!]
		\caption{Average error rates (lower is better), 20 runs $\pm$ standard deviation for each property, on the QM9 dataset. 
		We report on GAT and \newgat{} with 8 attention heads.
		}
		\centering
		\footnotesize
		\begin{tabu}{lC{\qmwidth}C{\qmwidth}C{\qmwidth}C{\qmwidth}C{\qmwidth}C{\qmwidth}C{\qmwidth}}
			\toprule
			& \multicolumn{6}{c}{Predicted Property} & \\
			Model & \multicolumn{1}{c}{1} & \multicolumn{1}{c}{2} & \multicolumn{1}{c}{3} & \multicolumn{1}{c}{4} & \multicolumn{1}{c}{5} & \multicolumn{1}{c}{6} & \multicolumn{1}{c}{7} \\
			\midrule
			GAT & 2.74\err{0.08} & 4.73\err{0.40} & 1.47\err{0.06} & 1.53\err{0.06} & 2.44\err{0.60} & 55.21\err{42.33} & 25.36\err{31.42}  \\
			\newgat{} & \textbf{2.67}\err{0.08} & \textbf{4.28}\err{0.23} & \textbf{1.43}\err{0.05} & \textbf{1.51}\err{0.07} & \textbf{2.21}\err{0.08} & \textbf{16.64}\err{1.17} & \textbf{13.61}\err{1.68} \\
			\midrule
			p-value & 0.0043 & <0.0001 & 0.0138  & 0.1691 & 0.0487 & 0.0001 & 0.0516  \\
			\bottomrule
			\rule{0pt}{1ex}    
		\end{tabu}
			\begin{tabu}{lC{\qmwidth}C{\qmwidth}C{\qmwidth}C{\qmwidth}C{\qmwidth}C{\qmwidth}}
			\toprule
			& \multicolumn{6}{c}{Predicted Property} \\
			Model & \multicolumn{1}{c}{8} & \multicolumn{1}{c}{9} & \multicolumn{1}{c}{10} & \multicolumn{1}{c}{11} & \multicolumn{1}{c}{12} & \multicolumn{1}{c}{13} \\
			\midrule
		GAT & 7.36\err{0.87} & 6.79\err{0.86} & 7.36\err{0.93} & 6.69\err{0.86} & 4.10\err{0.29} & 1.51\err{0.84} \\

			\newgat{} & \textbf{6.13}\err{0.59} & \textbf{6.33}\err{0.82} & \textbf{6.37}\err{0.86} & \textbf{5.95}\err{0.62} & \textbf{3.66}\err{0.29} & \textbf{1.09}\err{0.85} \\
			\midrule
			p-value & <0.0001 & 0.0458  & 0.0006 & 0.0017 & <0.0001 & 0.0621  \\
			\bottomrule
		\end{tabu}
	\label{tab:qm-results-appendix-with-pvalue}
	\end{table*}

\clearpage
\section{Complexity Analysis}
\label{sec:complexity}

We repeat the definitions of GAT, \newgat{} and \dpgat{}:
\begin{align}
&			 \mathrm{GAT} \text{ \cite{velic2018graph}:} &
e\left(\vh_i, \vh_j\right)= &
	\mathrm{LeakyReLU}
	\left(
		\va^{\top}\cdot\left[\mW\vh_i \| \mW\vh_j\right]
	\right)
	\label{eq:gat-vs-gat2}  
\\
	&			 \mathrm{\newgat{}} \text{ (our fixed version):} &
 e
 \left(\vh_{i}, \vh_{j}\right)  = &
\va
^{\top}
	\mathrm{LeakyReLU}
	\left(
		\mW \cdot \left[\vh_{i} \| \vh_{j}\right] 
	\right) \\
	&			 \mathrm{\dpgat{}} \text{ \cite{vaswani2017attention}:} &
	e\left(\vh_i, \vh_j\right) = &
		\left(\left(\vh_i^{\top}\mQ\right) \cdot \left(\vh_j^{\top}\mK \right)^{\top} \right)
		/ \sqrt{d'} 
	\label{eq:complexity}
\end{align}

\subsection{Time Complexity}
\label{subsec:time-complexity}

\para{GAT} As noted by \citet{velic2018graph}, the time complexity of a single GAT head may be expressed as $\mathcal{O}\left(\abs{\mathcal{V}}dd' + \abs{\mathcal{E}}d'\right)$. 
Because of GAT's static attention, this computation can be further optimized, by merging the linear layer $\va_1$ with $\mW$, merging $\va_2$ with $\mW$, and only then compute $\va_{\{1,2\}}^{\top}\mW\vh_i$ for every $i\in \mathcal{V}$.

\para{\newgat{}} require the same computational cost as GAT's declared complexity: $\mathcal{O}\left(\abs{\mathcal{V}}dd' + \abs{\mathcal{E}}d'\right)$: 
we denote $\mW=\left[\mW_1 \| \mW_2\right]$, where $\mW_1\in \mathbb{R}^{d' \times d}$ and $\mW_2^{d' \times d}$ contain the left half and right half of the columns of $\mW$, respectively.
We can first compute $\mW_1 \vh_i$ and $\mW_2 \vh_j$ for every $i,j\in \mathcal{V}$. This takes $\mathcal{O}\left(\abs{\mathcal{V}}dd'\right)$. 

Then, for every edge $\left(j,i\right)$, we compute $\mathrm{LeakyReLU}
	\left(
		\mW \cdot \left[\vh_{i} \| \vh_{j}\right] 
	\right)$ using the precomputed $\mW_1 \vh_i$ and $\mW_2 \vh_j$, since 
	$\mW \cdot \left[\vh_{i} \| \vh_{j}\right] = \mW_1 \vh_{i} + \mW_2 \vh_{j} $. This takes $\mathcal{O}\left(\abs{\mathcal{E}}d'\right)$.
	
	Finally, computing the results of the linear layer $\va$ takes additional $\mathcal{O}\left(\abs{\mathcal{E}}d'\right)$ time, and overall $\mathcal{O}\left(\abs{\mathcal{V}}dd' + \abs{\mathcal{E}}d'\right)$.

\para{\dpgat{}} also takes the same time.
We can first compute $\vh_i^{\top}\mQ$ and $\vh_j^{\top}\mK$ for every $i,j\in \mathcal{V}$. This takes $\mathcal{O}\left(\abs{\mathcal{V}}dd'\right)$. 
Computing the dot-product $\left(\vh_i^{\top}\mQ\right)  \left(\vh_j^{\top}\mK \right)^{\top}$ for every edge $\left(j,i\right)$ takes additional $\mathcal{O}\left(\abs{\mathcal{E}}d'\right)$ time, and overall $\mathcal{O}\left(\abs{\mathcal{V}}dd' + \abs{\mathcal{E}}d'\right)$.

\subsection{Parametric Complexity}
\label{subsec:parametric-cost}

\begin{table*}[!h]
    \centering
        \begin{tabular}{lccc}
        \toprule
         & GAT & \newgat{} & \dpgat{} \\
         \midrule
	Official & $2d'+ dd'$ & $d'+ 2dd'$ & $2dd_{k} + dd'$ \\
	In our experiments & $2d'+ dd'$ & $d'+ dd'$ & $2dd'$  \\
	\bottomrule
    \end{tabular}
    \caption{Number of parameters for each GNN type, in a single layer and a single attention head.
    }%
    \label{tab:params}
\end{table*}

All parametric costs are summarized in \Cref{tab:params}.
All following calculations refer to a single layer having a single attention head, omitting bias vectors.

\para{GAT} has learned vector and a matrix: $\va\in \mathbb{R}^{2d'}$ and $\mW\in \mathbb{R}^{d'\times d}$, thus overall $2d'+ dd'$ learned parameters.

\para{\newgat{}} has a matrix that is twice larger: $\mW\in \mathbb{R}^{d'\times 2d}$, because it is applied on the concatenation $\left[\vh_{i} \| \vh_{j}\right]$. 
Thus, the overall number of learned parameters is  $d'+ 2dd'$.
However in our experiments, to rule out the increased number of parameters over GAT as the source of empirical difference, we constrained $\mW=\left[\mW' \| \mW'\right]$, and thus the number of parameters were $d'+ dd'$.

\para{\dpgat{}} has $\mQ$ and $\mK$ matrices of sizes $dd_{k}$ each, and additional $dd'$ parameters in the value matrix $\mV$, thus $2dd_{k} + dd'$ parameters overall.
However in our experiments, we constrained $\mQ=\mK$ and set $d_{k}=d'$, and thus the number of parameters is only $2dd'$.

\end{document}